
\documentclass[nohyperref]{article}

\usepackage{microtype}
\usepackage{graphicx}
\usepackage{subfigure}
\usepackage{booktabs} 
\usepackage{subfigure}

\usepackage{hyperref}




\usepackage[preprint]{icml2023}

\usepackage{amsmath}
\usepackage{amssymb}
\usepackage{mathtools}
\usepackage{amsthm}

\usepackage[capitalize,noabbrev]{cleveref}

\theoremstyle{plain}
\newtheorem{theorem}{Theorem}[section]

\newtheorem{lemma}[theorem]{Lemma}

\theoremstyle{definition}

\theoremstyle{remark}

\usepackage[textsize=tiny]{todonotes}

\icmltitlerunning{Provable Robustness for Streaming Models with a Sliding Window} 

\usepackage{csquotes}
\usepackage{wrapfig}

\newtheorem*{statement}{\bf Statement}
\DeclareMathOperator\TV{\mathsf{TV}}
\DeclareMathOperator\erf{erf}

\begin{document}

\twocolumn[
\icmltitle{Provable Robustness for Streaming Models with a Sliding Window}




\begin{icmlauthorlist}
\icmlauthor{Aounon Kumar}{umd}
\icmlauthor{Vinu Sankar Sadasivan}{umd}
\icmlauthor{Soheil Feizi}{umd}
\end{icmlauthorlist}

\icmlaffiliation{umd}{Department of Computer Science, University of Maryland -- College Park, MD, USA}

\icmlcorrespondingauthor{Aounon Kumar}{aounon@umd.edu}
\icmlcorrespondingauthor{Soheil Feizi}{sfeizi@cs.umd.edu}

\icmlkeywords{Machine Learning, ICML}

\vskip 0.3in
]



\printAffiliationsAndNotice{}  

\begin{abstract}
The literature on provable robustness in machine learning has primarily focused on static prediction problems, such as image classification, in which input samples are assumed to be independent and model performance is measured as an expectation over the input distribution.
Robustness certificates are derived for individual input instances with the assumption that the model is evaluated on each instance separately.
However, in many deep learning applications such as online content recommendation and stock market analysis, models use historical data to make predictions.
Robustness certificates based on the assumption of independent input samples are not directly applicable in such scenarios.
In this work, we focus on the provable robustness of machine learning models in the context of data streams, where inputs are presented as a sequence of potentially correlated items.
We derive robustness certificates for models that use a fixed-size sliding window over the input stream.
Our guarantees hold for the average model performance across the entire stream and are independent of stream size, making them suitable for large data streams.
We perform experiments on speech detection and human activity recognition tasks and show that our certificates can produce meaningful performance guarantees against adversarial perturbations.

\end{abstract}

\section{Introduction}

Deep neural network (DNN) models are increasingly being adopted for real-time decision-making and prediction tasks.
Once a neural network is trained, it is often required to make fast predictions on an evolving stream of inputs, as in algorithmic trading \citep{Zhang2017trading, KRAUSS2017689, Korczak8104660, FISCHER2018654, ozbayoglu2020deep}, human action recognition \citep{Yang2015HAR, lstm2016HAR, RONAO2016235} and speech detection \citep{GRAVES2005602, Dennis2019SRNN, hsiao2020online}.
However, despite their impressive performance, 
DNNs are known to malfunction under tiny perturbations of the input, designed to fool them into making incorrect predictions \citep{Szegedy2014, BiggioCMNSLGR13, GoodfellowSS14, MadryMSTV18,  Carlini017}.
This vulnerability is not limited to static models like image classifiers and has been demonstrated for streaming models as well \citep{braverman2021adversarial, mladenovic2022online, EliezerJWY20, EliezerY20_sampling}.
Such input corruptions, commonly known as adversarial attacks, make DNNs especially risky for safety-critical applications of streaming models such as health monitoring \citep{IGNATOV2018915, Stamate7917848, lee2019predicting, cai2020review} and autonomous driving \citep{BojarskiTDFFGJM16, Xu2017driving, janai2020computer}.
What makes the adversarial streaming setting more challenging than the static one is that the adversary can exploit historical data to strengthen its attack.
For instance, it could wait for a critical decision-making point, such as a trading algorithm making a buy/sell recommendation or an autonomous vehicle approaching a stop sign, before generating an adversarial perturbation.

Over the years, a long line of research has been dedicated to mitigating the adversarial vulnerabilities of DNNs.
These methods seek to improve the empirical robustness of a model by introducing input corruptions during training \citep{KurakinGB17, BuckmanRRG18, GuoRCM18, DhillonALBKKA18, LiL17, GrosseMP0M17, GongWK17}.
However, such empirical defenses have been shown to break down under stronger adversarial attacks \citep{Carlini017, athalye18a, UesatoOKO18, tramer2020adaptive}.
This motivated the study of provable robustness in machine learning which seeks to obtain verifiable guarantees on the predictive performance of a DNN.
Several certified defense techniques have been developed over the years, most notable of which are based on convex relaxation \citep{WongK18, Raghunathan2018, Singla2019, Chiang20, singla2020secondorder}, interval-bound propagation \citep{gowal2018effectiveness, HuangSWDYGDK19, dvijotham2018training, mirman18b} and randomized smoothing \citep{cohen19, LecuyerAG0J19, LiCWC19, SalmanLRZZBY19, Levine-ICML21}.
Most research in provable robustness has focused on static prediction tasks like image classification and the streaming machine learning (ML) setting has not yet been considered.

In this work, we derive provable robustness guarantees for the streaming setting where inputs are presented as a sequence of potentially correlated items.
Our objective is to design robustness certificates that produce guarantees on the average model performance over long, potentially infinite, data streams.
Our threat model is defined as a man-in-the-middle adversary present between the DNN and the data stream that can perturb the input items before they are passed to the DNN.
The adversary is constrained by a limit on the average perturbation added to the inputs.
We show that a DNN that randomizes the inputs before making predictions is guaranteed to achieve a certain performance level for any adversary within the threat model.
Unlike many randomized smoothing-based approaches that aggregate predictions over several noised samples ($\sim 10^6$) of the input instance, our procedure only requires one sample of the randomized input, keeping the computational complexity of the DNN unchanged.
Our certificates are independent of the stream length, making them suitable for large streams.

\textbf{Technical Challenges:}
Provable robustness procedures developed for static tasks like image classification assume that the inputs are sampled independently from the data distribution.
Robustness certificates are derived for individual input instances with the assumption that the DNN is applied to each instance separately and the adversarial perturbation added to one instance does not affect the DNN's predictions on another.
However, in the streaming ML setting, the prediction at a given time-step is dependent on past input items in the data stream and a worst-case adversary can exploit this dependence between inputs to adapt its strategy and strengthen its attack.
A robustness certificate that is derived based on the assumption of independence of input samples may not hold for such correlated inputs.
Thus, there is a need to design provable robustness techniques tailored specifically for the streaming ML setting.

Out of the existing certified robustness techniques, randomized smoothing has become prominent due to its model-agnostic nature, scalability for high-dimensional problems \citep{LecuyerAG0J19}, and flexibility to adapt to different machine learning paradigms like reinforcement learning and structured outputs \citep{PolicySmoothing, wu2021crop, kumar2021center}.
This makes randomized smoothing a suitable candidate for provable robustness in streaming ML.
However, conventional randomized smoothing approaches require several evaluations ($\sim 10^6$) of the prediction model on different noise vectors in order to produce a robust output.
This significantly increases the computational requirements of the model making them infeasible for real-world streaming applications which require decisions to be made in a short time frame such as high-frequency trading and autonomous driving.
Our goal is to obtain robustness guarantees for a simple technique that only adds a single noise vector to the DNN's input.

Existing works on provable robustness in reinforcement learning \citep{PolicySmoothing, wu2021crop} indicate that if the prediction at a given time-step is a function of the entire stream till that step, the robustness guarantees worsen with the length of the stream and become vacuous for large stream sizes.
The tightness analysis of these certificates suggests that it might be difficult to achieve robustness guarantees that are independent of the stream size.
However, many practical streaming models use only a bounded number of past input items in order to make predictions at a given time step.
Recent work by \citet{EfroniJKM22} has also shown that near-optimal performance can be achieved by only observing a small number of past inputs for several real-world sequential decision-making problems. 
This raises the natural question:
\begin{displayquote}
Can we obtain better certificates if the DNN only used a fixed number of inputs from the stream?
\end{displayquote}

\begin{figure}[t]
    \centering
    \includegraphics[width=1.05\columnwidth]{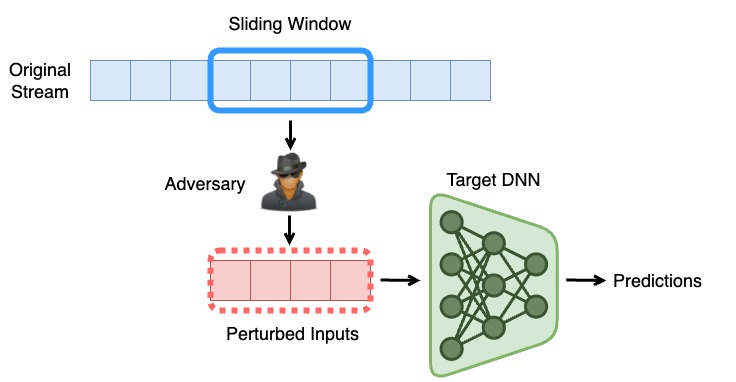}
    \vspace{-3mm}
    \caption{Adversarial Streaming Threat Model.}
    \label{fig:threat_model}
\end{figure}

\textbf{Our Contributions:} We design a robustness certificate for streaming models that use a fixed-sized sliding window over the data stream to make predictions (see Figure~\ref{fig:threat_model}).
In our setting, the DNN only uses the part of the data stream inside the window at any given time step.
We certify the average performance $Z$ of the model over a stream of size $t$:
\[Z = \frac{\sum_{i=1}^t f_i}{t},\]
where each $f_i$ measures the predictive performance of the DNN at time-step $i$ as a value in the range $[0, 1]$.

The adversary is allowed to perturb the input items inside the window at every time step separately.
The strength of the adversary is limited by a bound $\epsilon$ on the average size of the perturbation added:
\begin{equation*}
    \frac{\sum_{i=1}^t \sum_{k = 1}^w d(x_i, x_i^k)}{wt} \leq \epsilon,
\end{equation*}
where $x_i$ and $x_i^k$ are the input item at time-step $i$ and its $k$th adversarial perturbation respectively, $w$ is the window size and $d$ is a distance function to measure the size of the adversarial perturbations, e.g., $d(x_i, x_i^k) = \|x_i - x_i^k\|_2$.
Our adversarial threat model is general enough to subsume the scenario where the attacker only perturbs each stream element only once as a special case where all $x_i^k$s are set to some $x_i'$.

Our main theoretical result shows that the difference between the clean performance of a robust streaming model $\tilde{Z}$ and that in the presence of an adversarial attack $\tilde{Z}_\epsilon$ is bounded as follows:
\begin{equation}
\label{eq:main_result}
|\tilde{Z} - \tilde{Z}_\epsilon | \leq w \psi(\epsilon),
\end{equation}
where $\psi(.)$ is a concave function that bounds the total variation between the smoothing distributions at two input points as a function of the distance between them (condition~(\ref{eq:tv_bound}) in Section~\ref{sec:notations}).
Such an upper bound always exists for any smoothing distribution.
For example, when the distance between the points is measured using the $\ell_2$-norm and the smoothing distribution is a Gaussian $\mathcal{N}(0, \sigma^2 I)$ with variance $\sigma^2$, then the concave upper bound is given by $\psi(\cdot) = \erf(\cdot/2\sqrt{2}\sigma)$.
Our robustness certificate is independent of the length of the stream and depends only on the window size $w$ and average perturbation size $\epsilon$.
This suggests that streaming ML models with smaller window sizes are provably more robust to adversarial attacks.

We perform experiments on two real-world applications -- human activity recognition and speech keyword detection. We  use the UCI HAR datset \citep{reyes2012uci} for human activity  recognition and the Speech commands dataset \citep{speechcommandsv2} for speech keyword detection. We train convolutional networks that take sliding windows as inputs and provide robustness guarantees for their performance. In our experiments, we consider two different scenarios for the adversary. In the first case, the adversary can perturb an input only once. In the more general second scenario, the adversary can perturb each sliding window separately, making it a powerful attacker. We develop strong adversaries for both of these scenarios and show their effectiveness in our experiments. We then show that our certificates provide meaningful robustness guarantees in the presence of such strong adversaries. Consistent with our theory, our experiments also demonstrate that a smaller window size $w$ gives a stronger certificate.



\section{Related Work}
The adversarial streaming setup has been studied extensively in recent years.
\citet{mladenovic2022online} designed an attack for transient data streams that do not allow the adversary to re-attack past input items.
In their setting, the adversary only has partial knowledge of the target DNN and the perturbations applied in previous time steps are irrevocable.
Their objective is to produce an adversarial attack with minimal access to the data stream and the target model.
Our goal, on the other hand, is to design a provably robust method that can defend against as general and as strong an adversary as possible.
We assume that the adversary has full knowledge of the parameters of the target DNN and can change the adversarial perturbations added in previous time steps.
Our threat model includes transient data streams as a special case and applies even to adversaries that only have partial access to the DNN.
Streaming adversarial attacks have also been studied for sampling algorithms such as Bernoulli sampling and reservoir sampling \citep{EliezerY20_sampling}.
Here, the goal of the adversary is to create a stream that is unrepresentative of the actual data distribution.
Other works have studied the adversarial streaming setup for specific data analysis problems like frequency moment estimation \citep{EliezerJWY20}, submodular maximization \citep{MitrovicBNTC17}, coreset construction and row sampling \citep{braverman2021adversarial}.
In this work, we focus on a robustness certificate for general DNN models in the streaming setting under the conventional notion of adversarial attacks in machine learning literature.
We use a sliding-window computational model which has been extensively studied over several years for many streaming applications \cite{GanardiHL19, FeigenbaumKZ04, Datar2007}.
Recently \citet{EfroniJKM22} also showed that a short-term memory is sufficient for several real-world reinforcement learning tasks.

\vspace{-1mm}
A closely related setting is that of adversarial reinforcement learning.
Adversarial attacks have been designed that either directly corrupt the observations of the agent \citep{HuangPGDA17, BehzadanM17, PattanaikTLBC18} or introduce adversarial behavior in a competing agent \cite{GleaveDWKLR20}.
Robust training methods, such as adding adversarial noise \citep{Kamalaruban2020, Vinitsky2020} and training with a learned adversary in an online alternating fashion \citep{zhang2021robust}, have been proposed to in improve the robustness of RL agents.
Several certified defenses have also been developed over the years.
For instance, \citet{Zhang-NeurIPS2020} developed a method that can certify the actions of an RL agent at each time step under a fixed adversarial perturbation budget.
It can certify the total reward obtained at the end of an episode if each of the intermediate actions is certifiably robust.
Our streaming formulation allows the adversary to choose the budget at each time step as long as the average perturbation size remains below $\epsilon$ over time.
Our framework also does not require each prediction to be robust in order to certify the average performance of the DNN.
More recent works in certified RL can produce robustness guarantees on the total reward without requiring every intermediate action to be robust or the adversarial budget to be fixed \citep{PolicySmoothing, wu2021crop}.
However, these certificates degrade for longer streams and the tightness analysis of these certificates indicates that this dependence on stream size may not be improved.
Our goal is to keep the robustness guarantees independent of stream size so that they are suitable even for large streams.

The literature on provable robustness has primarily focused on static prediction problems like image classification.
One of the most prominent techniques in this line of research is randomized smoothing.
For a given input image, this technique aggregates the output of a DNN on several noisy versions of the image to produce a robust class label \citep{LecuyerAG0J19, cohen19}.
This is the first approach that scaled up to high-dimensional image datasets like ImageNet for $\ell_2$-norm bounded adversaries..
It does not make any assumptions on the underlying neural network such as Lipschitz continuity or a specific architecture, making it suitable for conventional DNNs that are several layers deep.
However, randomized smoothing also suffers some fundamental limitations for higher norms such as the $\ell_\infty$-norm \citep{kumar2020curse}.
Due to its flexible nature, randomized smoothing has also been adapted for tasks beyond classification, such as segmentation and deep generative modeling, with multi-dimensional and structured outputs like images, segmentation masks, and language \cite{kumar2021center}.
For such outputs, robustness certificates are designed in terms of a distance metric in the output space such as LPIPS distance, intersection-over-union and total variation distance.
However, provable robustness in the static setting assumes a fixed budget on the size of the adversarial perturbation for each input instance and does not allow the adversary to choose a different budget for each instance.
In our streaming threat model, we allow the adversary the flexibility of allocating the adversarial budget to different time steps in an effective way, attacking more critical input items with a higher budget and conserving its budget at other time steps.
Recent work on provable robustness against Wasserstein shifts of the data distribution allows the adversary to choose the attack budget for each instance differently \citep{kumar2022dist_shift}.
However, unlike our streaming setting, the input instances are drawn independently from the data distribution and the adversarial perturbation applied to one instance does not impact the performance of the DNN on another.



\section{Preliminaries and Notation}
\label{sec:notations}
\textbf{Streaming ML Setting:} We define a data stream of size $t$ as a sequence of input items $x_1, x_2, \ldots, x_i, \ldots, x_t$ generated one-by-one from an input space $\mathcal{X}$ over discrete time steps.
At each time step $i$, a DNN model $\mu$ makes a prediction that may depend on no more than $w$ of the previous inputs.
We refer to the contiguous block of past input items as a window $W_i \in \mathcal{X}^{\min(i, w)}$ of size $w$ defined as follows:
\begin{align*}
    W_i = 
    \begin{cases}
    (x_1, x_2, \ldots, x_i) & \text{for } i \leq w\\
    (x_{i-w+1}, x_{i-w+2}, \ldots, x_i) & \text{otherwise.}
    \end{cases}
\end{align*}

The performance of the model $\mu$ at time step $i$ is given by a function $f_i: \mathcal{X}^{\min(i, w)} \rightarrow [0, 1]$ that passes the window $W_i$ through the model $\mu$, compares the prediction with the ground truth and outputs a value in the range $[0, 1]$.
For instance, in speech recognition, the window $W_i$ would represent the audio from the past few seconds which gets fed to the model $\mu$.
The function $f_i = \mathbf{1}\{\mu(W_i) = y_i \}$ could indicate whether the prediction of $\mu$ matches the ground truth $y_i$.
Similarly, in autonomous driving, we can define a performance function $f_i = \text{IoU}(\mu(W_i), y_i)$ that measures the average intersection-over-union of the segmentation mask of the surrounding environment.
We define the overall performance $Z$ of the model $\mu$ as an average over the $t$ time-steps:
\[Z = \frac{\sum_{i=1}^t f_i}{t}.\]

\textbf{Threat Model:} An adversary $A$ is present between the DNN and the data stream which can perturb the inputs with the objective of minimizing the average performance $Z$ of the DNN (see Figure~\ref{fig:threat_model}).
Let $x_i'$ be the perturbed input at step $i$.
We define a constraint on the amount by which the adversary can perturb the inputs as a bound on the average distance between the original input items $x_i$ and their perturbed versions $x_i'$:
\begin{equation}
\label{eq:adv_constraint}
\frac{\sum_{i=1}^t d(x_i, x_i')}{t} \leq \epsilon,
\end{equation}
where $d$ is a function that measures the distance between a pair of input items from $\mathcal{X}$, e.g., $d(x_i, x_i') = \|x_i - x_i'\|_2$.
The adversary seeks to minimize the overall performance $Z$ of the model without violating the above constraint, i.e.,
\[\min_{A \in \mathcal{A}_{\epsilon}} \sum_{i=1}^t f_i(A(x_i), A(x_{i-1}), \ldots, A(x_{i-w+1})) / t,\]
where $\mathcal{A}_{\epsilon}$ is the set of all adversaries satisfying constraint~(\ref{eq:adv_constraint}).
We also study another threat model where the adversary is allowed to attack an input item $x_i$ in every window that it appears in.
We denote the $k$-th attack of $x_i$ as $x_i^k$ and redefine the above constraint as follows:
\begin{equation}
\label{eq:adv_constraint_window}
    \frac{\sum_{i=1}^t \sum_{k = 1}^w d(x_i, x_i^k)}{wt} \leq \epsilon
\end{equation}
This threat model is more general than the one defined by constraint~(\ref{eq:adv_constraint}) because it subsumes this constraint as a special case when all $x_i^k$ are equal to $x_i'$.
Thus, any robustness guarantee that holds for this stronger threat model must also hold for the previous one.

\textbf{Robustness Procedure:} Our goal is to design a procedure that has provable robustness guarantees against the above threat models.
We define a robust prediction model $\tilde{\mu}$: Given an input $x_i \in \mathcal{X}$, we sample a point $\tilde{x}_i$ from a probability distribution $\mathcal{S}(x_i)$ around $x_i$ (e.g., $\mathcal{N}(x_i, \sigma^2 I)$) and evaluate the model $\mu$ on $\tilde{x}_i$.
Define the performance of $\tilde{\mu}$ at time-step $i$ to be the expected value of $f_i$ under the randomized inputs, i.e.,
\[\tilde{f}_i = \mathbb{E}_{\tilde{x_i} \sim \mathcal{S}(x_i)}[f_i(\tilde{x}_i, \tilde{x}_{i-1}, \ldots, \tilde{x}_{i-w+1})]\]
and the overall performance as $\tilde{Z} = \sum_{i=1}^t \tilde{f}_i/t$.

Let $\psi(\cdot)$ be a concave function bounding the total variation between the distributions $\mathcal{S}(x_i)$ and $\mathcal{S}(x_i')$ as a function of the distance between them, i.e.,
\begin{equation}
\label{eq:tv_bound}
\mathsf{TV}(\mathcal{S}(x_i), \mathcal{S}(x_i')) \leq \psi(d(x_i, x_i')).
\end{equation}
Such a bound always exists regardless of the shape of the  smoothing distribution because as the distance between the points $x_i$ and $x_i'$ goes from 0 to $\infty$, the total variation goes from 0 to 1.
A trivial concave bound could be obtained by simply taking the convex hull of the region under the total variation curve (see Figure~\ref{fig:tv_bound}).
However, to find a closed-form expression for $\psi$, we need to analyze different smoothing distributions and distance functions separately.
If the smoothing distribution is a Gaussian $\mathcal{N}(0, \sigma^2 I)$ with variance $\sigma^2$ and the distance is measured using the $\ell_2$-norm, as in all of our experiments, then $\psi(\|x_i - x_i'\|_2) = \erf(\|x_i - x_i'\|_2/2\sqrt{2}\sigma)$, where $\erf$ is the Gauss error function.
For a uniform smoothing distribution within an interval of size $b$ in each dimension of $x_i$ and the $\ell_1$-distance metric, $\psi(\|x_i - x_i'\|_1) = \|x_i - x_i'\|_1 / b$.
See Appendix~\ref{sec:psi-functions} for proof.

\begin{figure}[t]
    \hspace{-3mm}
    \includegraphics[width=1.05\columnwidth]{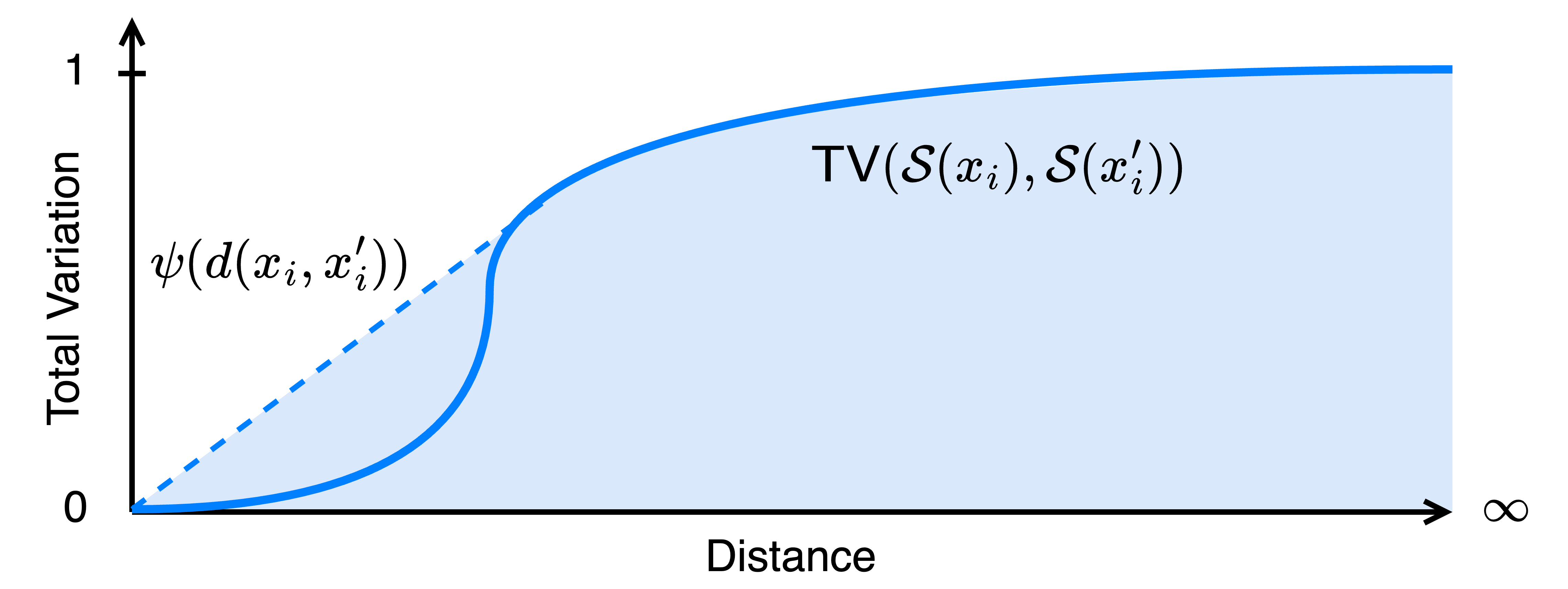}
    \caption{Constructing a concave upper bound $\psi(\cdot)$ for any smoothing distribution $\mathcal{S}$.}
    \label{fig:tv_bound}
\end{figure}
    
\section{Robustness Certificate}
\label{sec:rob_cert}
In this section, we prove robustness guarantees for the simpler threat model defined by constraint~(\ref{eq:adv_constraint}) where each input item is allowed to be attacked only once.
We include complete proofs of our theorems for this threat model in this section for clarity.
The proofs for the more general case in the next section use similar techniques and have been included in the appendix.
In the following lemma, we bound the change in the performance function $\tilde{f}_i$ at each time-step $i$ using the function $\psi$ and the size of the adversarial perturbation added at each step.
For the proof, we first decompose the change in the value of this function into components for each input item.
Since each of these components can be expressed as the difference of the expected value of a function in the range $[0,1]$ under two probability distributions, they can be bounded by the total variation of these distributions.
\begin{lemma}
\label{lem:per-step-bound}
The change in each $\tilde{f}_i$ under an adversary in $\mathcal{A}_\epsilon$ is bounded as
\begin{align*}
|\tilde{f}_i(x_i, x_{i-1}, \ldots, x_{i-s+1}) & - \tilde{f}_i(x_i', x_{i-1}', \ldots, x_{i-s+1}')|\\
    & \leq \sum_{j = i}^{i-s+1} \psi(d(x_j, x_j')),
\end{align*}
where $s = \min(i, w)$.
\end{lemma}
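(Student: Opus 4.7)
The plan is to use a standard hybrid (coupling-style) telescoping argument on the coordinates of the window. First I would unfold the definition of $\tilde f_i$ so that it is written as an expectation of $f_i$ under the product distribution $\mathcal{S}(x_i) \otimes \mathcal{S}(x_{i-1}) \otimes \cdots \otimes \mathcal{S}(x_{i-s+1})$, and similarly for the perturbed window with $\mathcal{S}(x_j')$ in place of $\mathcal{S}(x_j)$. Because $f_i$ takes values in $[0,1]$, the difference of the two expectations is controlled by the total variation distance between the two product measures, via the standard inequality $|\mathbb{E}_P[g] - \mathbb{E}_Q[g]| \le \TV(P,Q)$ whenever $g \in [0,1]$.

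Next I would bound the TV distance between the two product measures by a sum of per-coordinate TV distances. Concretely, I would introduce the $s+1$ hybrid product measures
\[
P^{(k)} = \mathcal{S}(x_i')\otimes\cdots\otimes\mathcal{S}(x_{i-k+1}')\otimes\mathcal{S}(x_{i-k})\otimes\cdots\otimes\mathcal{S}(x_{i-s+1}),
\]
so that $P^{(0)}$ is the clean product and $P^{(s)}$ is the perturbed product. By the triangle inequality for total variation, $\TV(P^{(0)},P^{(s)}) \le \sum_{k=1}^{s} \TV(P^{(k-1)}, P^{(k)})$. Consecutive hybrids differ in only one coordinate and share the remaining ones, so a short marginalization argument (or equivalently tensorization of TV for product measures) yields $\TV(P^{(k-1)}, P^{(k)}) = \TV(\mathcal{S}(x_{i-k+1}), \mathcal{S}(x_{i-k+1}'))$.

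Finally I would invoke hypothesis~(\ref{eq:tv_bound}) to replace each per-coordinate TV by $\psi(d(x_j, x_j'))$, giving
\[
|\tilde f_i(x_i,\ldots,x_{i-s+1}) - \tilde f_i(x_i',\ldots,x_{i-s+1}')| \le \sum_{j=i-s+1}^{i}\psi(d(x_j,x_j')),
\]
which is the stated bound (up to the order of summation). I do not expect a real obstacle here: the argument is just the tensorization of TV plus the $[0,1]$-valued test-function characterization of TV. The only thing to watch is the edge case $i \le w$, where the window length $s=\min(i,w)$ can be smaller than $w$; this is handled uniformly because $\tilde f_i$ is defined over exactly $s$ smoothed inputs and the hybrid argument is built over those same $s$ coordinates.
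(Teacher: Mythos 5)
Your proposal is correct and is essentially the paper's own argument: the paper performs the same hybrid telescoping over the $s$ window coordinates, bounding each single-coordinate difference by $\TV(\mathcal{S}(x_j),\mathcal{S}(x_j'))$ via the $[0,1]$-valued test-function characterization (there the test function is the conditional expectation $q_j$ obtained by integrating out the other coordinates), and then applies condition~(\ref{eq:tv_bound}). The only cosmetic difference is that you phrase the telescoping as subadditivity of TV over product measures, whereas the paper telescopes the expectations of $f_i$ directly; the hybrids and the resulting bound are identical.
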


\begin{proof}
The left-hand side of the above inequality can be re-written as:
\begin{align*}
    &|\tilde{f}_i(x_i, x_{i-1}, \ldots, x_{i-s+1}) - \tilde{f}_i(x_i', x_{i-1}', \ldots, x_{i-s+1}')|\\
    &= |\tilde{f}_i(x_i, x_{i-1}, \ldots, x_{i-s+1}) - \tilde{f}_i(x_i', x_{i-1}, \ldots, x_{i-s+1})\\
    & \quad +\tilde{f}_i(x_i', x_{i-1}, \ldots, x_{i-s+1}) - \tilde{f}_i(x_i', x_{i-1}', \ldots, x_{i-s+1}')|\\
    &= \Bigg| \sum_{j = i}^{i-s+1} \tilde{f}_i(x_i', \ldots x_j, \ldots, x_{i-s+1})\\
    & \quad \quad \quad \quad - \tilde{f}_i( x_i', \ldots, x_j', \ldots, x_{i-s+1}) \Bigg|\\
    &\leq \sum_{j = i}^{i-s+1} \Big|\tilde{f}_i(x_i', \ldots x_j, \ldots, x_{i-s+1})\\
    & \quad \quad \quad \quad - \tilde{f}_i(x_i', \ldots, x_j', \ldots, x_{i-s+1})\Big|
\end{align*}
The two terms in each summand differ only in the $j$th input.
Thus, the $j$th term in the above summation can be written as the difference of the expected value of some $[0, 1]$-function $q_j$ under the distributions $\mathcal{S}(x_j)$ and $\mathcal{S}(x_j')$, i.e., $|\mathbb{E}_{\tilde{\chi} \sim \mathcal{S}(x_j)}[q_j(\tilde{\chi})] - \mathbb{E}_{\tilde{\chi} \sim \mathcal{S}(x_j')}[q_j(\tilde{\chi})]|$, which can be upper bounded by the total variation between $\mathcal{S}(x_j)$ and $\mathcal{S}(x_j')$.
Here, $q_j$ is given by:
\[q_j(\chi) = \mathbb{E}[f_i(\tilde{x}_i', \ldots, \tilde{x}_{j-1}', \chi, \tilde{x}_{j+1} \ldots, \tilde{x}_{i-s+1})],\]
where $\chi \in \mathcal{X}$ is the $j$th input item, the inputs before $\chi$ are drawn from the respective adversarially shifted smoothing distributions and the inputs after $\chi$ are drawn from the original distributions, i.e., $\tilde{x}_i' \sim \mathcal{S}(x_i'), \ldots, \tilde{x}_{j-1}' \sim \mathcal{S}(x_{j-1}')$ and $\tilde{x}_{j+1} \sim \mathcal{S}(x_{j+1}), \ldots, \tilde{x}_{i-s+1} \sim \mathcal{S}(x_{i-s+1})$.

Without loss of generality, assume $\mathbb{E}_{\tilde{\chi} \sim \mathcal{S}(x_j)}[q_j(\tilde{\chi})] \geq \mathbb{E}_{\tilde{\chi} \sim \mathcal{S}(x_j')}[q_j(\tilde{\chi})]$. Then,
\allowdisplaybreaks
\begin{align*}
    &\big| \mathbb{E}_{\tilde{\chi} \sim \mathcal{S}(x_j)}[q_j(\tilde{\chi})] - \mathbb{E}_{\tilde{\chi} \sim \mathcal{S}(x_j')}[q_j(\tilde{\chi})] \big|\\
    &= \int_{\mathcal{X}} q_j(x) \mu_1(x) dx - \int_{\mathcal{X}} q_j(x) \mu_2(x) dx \tag{$\mu_1$ and $\mu_2$ are the PDFs of $\mathcal{S}(x_j)$ and $\mathcal{S}(x_j')$}\\
    &= \int_{\mathcal{X}} q_j(x) (\mu_1(x) - \mu_2(x)) dx\\
    &= \int_{\mu_1 > \mu_2} q_j(x) (\mu_1(x) - \mu_2(x)) dx\\
    & \quad \quad - \int_{\mu_2 > \mu_1} q_j(x) (\mu_2(x) - \mu_1(x)) dx\\
    &\leq \int_{\mu_1 > \mu_2} \max_{x' \in \mathcal{X}}q_j(x') (\mu_1(x) - \mu_2(x)) dx\\
    & \quad \quad - \int_{\mu_2 > \mu_1} \min_{x' \in \mathcal{X}}q_j(x') (\mu_2(x) - \mu_1(x)) dx\\
    &\leq \int_{\mu_1 > \mu_2} (\mu_1(x) - \mu_2(x)) dz \tag{since $\max_{x' \in \mathcal{X}} q_j(x') \leq 1$ and $\min_{x' \in \mathcal{X}} q_j(x') \geq 0$}\\
    &= \frac{1}{2} \int_{\mathcal{X}} |\mu_1(x) - \mu_2(x)| dx =\TV(\mathcal{S}(x_1), \mathcal{S}(x_2)).\\
\end{align*}
The equality in the last line follows from the fact that $\int_{\mu_1 > \mu_2} (\mu_1(x) - \mu_2(x)) dx = \int_{\mu_2 > \mu_1} (\mu_2(x) - \mu_1(x)) dx = \frac{1}{2} \int_{\mathcal{X}} |\mu_1(x) - \mu_2(x)| dx$.

Therefore, from condition~(\ref{eq:tv_bound}), we have:
\begin{align*}
    |\tilde{f}_i(& x_i', \ldots x_j, \ldots, x_{i-w+1}) - \tilde{f}_i(x_i', \ldots, x_j', \ldots, x_{i-w+1})|\\
    &\leq \mathsf{TV}(\mathcal{S}(x_j), \mathcal{S}(x_j')) \leq \psi(d(x_j, x_j')).
\end{align*}
This proves the statement of the lemma.
\end{proof}


Now we use the above lemma to prove the main robustness guarantee.
We first decompose the change in the average performance into the average of the differences at each time step.
Then we apply lemma~\ref{lem:per-step-bound} to bound each difference with the function $\psi$ of the per-step perturbation size.
We then utilize the convex nature of $\psi$ to convert this average over the performance differences to an average of perturbation sizes, which completes the proof.
\begin{theorem}
\label{thm:performance-bound}
Let $\tilde{Z}_{\epsilon}$ to be the minimum $\tilde{Z}$ for an adversary in $\mathcal{A}_\epsilon$. Then,
\[|\tilde{Z} - \tilde{Z}_\epsilon | \leq w \psi(\epsilon).\]
\end{theorem}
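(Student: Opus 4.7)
The plan is to combine Lemma~\ref{lem:per-step-bound} with a counting argument on how many windows each stream element participates in, and then exploit the concavity of $\psi$ via Jensen's inequality to convert a per-step bound into a bound in terms of the average perturbation budget $\epsilon$.

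First, I would fix an arbitrary adversary $A\in\mathcal{A}_\epsilon$, let $x_i' = A(x_i)$ denote its perturbed stream, and bound $|\tilde{Z}-\tilde{Z}_\epsilon|$ by $\tfrac{1}{t}\sum_{i=1}^t |\tilde{f}_i(W_i)-\tilde{f}_i(W_i')|$ via the triangle inequality. Applying Lemma~\ref{lem:per-step-bound} to each term, this is at most
\begin{equation*}
\frac{1}{t}\sum_{i=1}^t \sum_{j=i-s_i+1}^{i}\psi(d(x_j,x_j')), \qquad s_i = \min(i,w).
\end{equation*}

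Next comes the key combinatorial step: swap the order of summation and observe that each stream index $j\in\{1,\dots,t\}$ appears in at most $w$ windows, namely those $W_i$ with $j\le i\le \min(j+w-1,t)$. Hence the double sum is bounded by $w\sum_{j=1}^t \psi(d(x_j,x_j'))$, giving
\begin{equation*}
|\tilde{Z}-\tilde{Z}_\epsilon| \le \frac{w}{t}\sum_{j=1}^t \psi(d(x_j,x_j')).
\end{equation*}

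Finally, I would invoke the concavity of $\psi$ together with Jensen's inequality to move the average inside, and then use the monotonicity of $\psi$ (which is clear from the construction in Figure~\ref{fig:tv_bound}, since $\psi$ bounds a total variation that grows from $0$ to $1$) together with the adversary's constraint~(\ref{eq:adv_constraint}):
\begin{equation*}
\frac{1}{t}\sum_{j=1}^t \psi(d(x_j,x_j')) \le \psi\!\left(\frac{1}{t}\sum_{j=1}^t d(x_j,x_j')\right) \le \psi(\epsilon).
\end{equation*}
Multiplying through by $w$ yields the desired bound. Since this holds for every adversary in $\mathcal{A}_\epsilon$, it holds in particular for the minimizer defining $\tilde{Z}_\epsilon$.

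The only subtle point I anticipate is the counting step at the boundary of the stream: for small $i$ the window $W_i$ has length $s_i=i<w$, so one must verify that the ``each $x_j$ appears in at most $w$ windows'' inequality still holds uniformly; this is immediate because shrinking windows only reduces the number of appearances. Everything else is a mechanical application of Lemma~\ref{lem:per-step-bound}, Jensen's inequality for concave $\psi$, and the constraint defining $\mathcal{A}_\epsilon$.
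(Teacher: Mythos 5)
Your proposal is correct and follows essentially the same route as the paper's proof: triangle inequality over time steps, Lemma~\ref{lem:per-step-bound} per step, the counting argument that each index appears in at most $w$ windows, and Jensen's inequality for concave $\psi$. You are in fact slightly more careful than the paper in two places --- explicitly noting that the final step $\psi\bigl(\tfrac{1}{t}\sum_j d(x_j,x_j')\bigr)\le\psi(\epsilon)$ requires monotonicity of $\psi$ in addition to the constraint, and verifying the boundary case $s_i=i<w$ in the counting step --- but these are refinements of the same argument, not a different one.
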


\begin{proof}
Let $\tilde{Z}'$ be the overall performance of $\tilde{M}$ under an adversary. Then,
\allowdisplaybreaks
\begin{align*}
    |\tilde{Z} - \tilde{Z}'| &= \Bigg| \frac{\sum_{i=1}^t \tilde{f}_i(x_i, x_{i-1}, \ldots, x_{i-s+1})}{t}\\
    &- \frac{\sum_{i=1}^t \tilde{f}_i(x_i', x_{i-1}', \ldots, x_{i-s+1}')}{t} \Bigg| \tag{where $s = \min(i, w)$}\\
    &\leq \frac{1}{t} \sum_{i=1}^t \Big|\tilde{f}_i(x_i, x_{i-1}, \ldots, x_{i-s+1})\\
    &\quad \quad \quad \quad  - \tilde{f}_i(x_i', x_{i-1}', \ldots, x_{i-s+1}') \Big|\\
    &\leq \sum_{i=1}^t \sum_{j = i}^{i-s+1} \psi(d(x_j, x_j')) / t \tag{from lemma~\ref{lem:per-step-bound}}\\
    &\leq w \sum_{i=1}^t \psi(d(x_i, x_i')) / t \tag{since each term appears at most $w$ times}\\
    &\leq w \psi \left(\sum_{i=1}^t d(x_i, x_i') / t \tag{$\psi$ is concave and Jensen's inequality} \right)
\end{align*}
Therefore, for the worst-case adversary in $\mathcal{A}_\epsilon$, we have
\[|\tilde{Z} - \tilde{Z}_\epsilon | \leq w \psi(\epsilon)\]
from constraint~(\ref{eq:adv_constraint}) on the average distance between the original and perturbed inputs.
\end{proof}

Although the above certificate is designed for the sliding-window computational model for streaming applications, it may also be applied to the static tasks like classification with a fixed adversarial budget for all inputs by setting $w=1$.
In Appendix~\ref{sec:compare_static}, we compare the our bound with that obtained by \citet{cohen19} for an $\ell_2$-norm bounded adversary and a Gaussian smoothing distribution.
While the above bound is not tight, our analysis shows that the gap with static $\ell_2$-certificate is small for meaningful robustness guarantees.

\section{Attacking Each Window} \label{sec:attackingeahcwindow}
Now we consider the case where the adversary is allowed to attack each window seen by the target DNN separately.
The threat model in this section is defined using constraint~(\ref{eq:adv_constraint_window}).
It is able to re-attack an input item $x_i$ in each new window.
Similar to the definition of a window in Section~\ref{sec:notations}, define an adversarially corrupted window $W_i'$ as:
\begin{align*}
    W_i' = 
    \begin{cases}
    (x_1^i, x_2^{i-1}, \ldots, x_i^1) & \text{for } i \leq w\\
    (x_{i-w+1}^w, x_{i-w+2}^{w-1}, \ldots, x_i^1) & \text{otherwise,}
    \end{cases}
\end{align*}
where $x_i^k$ is the $k^\text{th}$ perturbed instance of $x_i$.

Similar to the certificate derived in Section~\ref{sec:rob_cert}, we first bound the change in the per-step performance function and then use that result to prove the final robustness guarantee.
We formulate the following lemma similar to Lemma~\ref{lem:per-step-bound} but accounting for the fact that each input item can be perturbed multiple times.


\begin{lemma}
\label{lem:per-window-bound}
The change in each $\tilde{f}_i$ under an adversary in $\mathcal{A}_\epsilon$ is bounded as
\[|\tilde{f}_i(W_i) - \tilde{f}_i(W_i')| \leq \sum_{j = i-s+1}^{i} \psi(d(x_j, x_j^{i+1-j})),\]
where $s = \min (i, w)$.
\end{lemma}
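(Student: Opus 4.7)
The plan is to mimic the proof of Lemma~\ref{lem:per-step-bound} essentially verbatim, paying attention only to the re-indexing of the perturbed items. The key observation is that within the window $W_i'$, the input $x_j$ appears as its $(i+1-j)$-th perturbation $x_j^{i+1-j}$ (checking endpoints: at $j=i$ we get $x_i^1$, and at $j=i-w+1$ we get $x_{i-w+1}^w$, matching the definition of $W_i'$). So the telescoping decomposition has the same structure as before, just with $x_j'$ replaced by $x_j^{i+1-j}$ throughout.

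First I would write
\begin{align*}
\tilde{f}_i(W_i) - \tilde{f}_i(W_i') = \sum_{j=i-s+1}^{i} \Big[ & \tilde{f}_i(x_i^1,\ldots,x_{j+1}^{i-j}, x_j, x_{j-1}, \ldots, x_{i-s+1})\\
 - & \tilde{f}_i(x_i^1,\ldots,x_{j+1}^{i-j}, x_j^{i+1-j}, x_{j-1}, \ldots, x_{i-s+1}) \Big],
\end{align*}
which is a telescoping sum in which consecutive terms differ in exactly one coordinate (namely, whether the $j$-th slot holds the clean $x_j$ or its perturbation $x_j^{i+1-j}$). Then apply the triangle inequality to pull the absolute value inside the sum.

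Next, for each summand, define, analogously to the function $q_j$ in the proof of Lemma~\ref{lem:per-step-bound}, a function
\[q_j(\chi) = \mathbb{E}[f_i(\tilde{x}_i^1, \ldots, \tilde{x}_{j+1}^{i-j}, \chi, \tilde{x}_{j-1}, \ldots, \tilde{x}_{i-s+1})],\]
where the inputs to the left of position $j$ are drawn from the adversarially shifted smoothing distributions $\mathcal{S}(x_k^{i+1-k})$ and those to the right from $\mathcal{S}(x_k)$. Then $q_j$ maps $\mathcal{X} \to [0,1]$, and the $j$-th summand is exactly
\[\Big| \mathbb{E}_{\tilde{\chi} \sim \mathcal{S}(x_j)} q_j(\tilde{\chi}) - \mathbb{E}_{\tilde{\chi} \sim \mathcal{S}(x_j^{i+1-j})} q_j(\tilde{\chi}) \Big|,\]
which by the same total-variation argument used for Lemma~\ref{lem:per-step-bound} is bounded by $\mathsf{TV}(\mathcal{S}(x_j), \mathcal{S}(x_j^{i+1-j}))$, and then by $\psi(d(x_j, x_j^{i+1-j}))$ via condition~(\ref{eq:tv_bound}). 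Summing over $j$ from $i-s+1$ to $i$ gives the claimed bound.

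There is no real obstacle here since all the heavy lifting (reducing a one-coordinate difference of expectations to a total variation distance, then to $\psi$) was already done in Lemma~\ref{lem:per-step-bound}. The only things to be careful about are (i) the indexing $k = i+1-j$ so that the correct attack iterate is assigned to each position, and (ii) the boundary case $i \leq w$, which is handled uniformly by using $s = \min(i,w)$ as the number of summands. Once these bookkeeping points are handled, the rest of the argument transfers mechanically.
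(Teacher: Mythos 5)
Your proposal is correct and follows essentially the same route as the paper's proof: the identical telescoping decomposition (replacing one window slot at a time, with the more recent slots already perturbed and the older ones still clean), the same reduction of each one-coordinate difference to a total-variation bound via a $[0,1]$-valued function $q_j$, and the same application of condition~(\ref{eq:tv_bound}); the only difference is cosmetic indexing ($k=i+1-j$ versus the paper's summation over the attack iterate $k$). Your explicit handling of the boundary case $i \le w$ via $s=\min(i,w)$ is a minor point the paper's appendix glosses over.
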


The proof is available in Appendix~\ref{proof:per-window-bound}.

We prove the same certified robustness bound as in Section~\ref{sec:rob_cert} but the $\epsilon$ here is defined according to constraint~(\ref{eq:adv_constraint_window}).
\begin{theorem}
\label{thm:performance-bound-each-window}
Let $\tilde{Z}_{\epsilon}$ to be the minimum $\tilde{Z}$ for an adversary in $\mathcal{A}_\epsilon$. Then,
\[|\tilde{Z} - \tilde{Z}_\epsilon | \leq w \psi(\epsilon).\]
\end{theorem}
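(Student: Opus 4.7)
The plan is to mirror the proof of Theorem~\ref{thm:performance-bound}, but using Lemma~\ref{lem:per-window-bound} in place of Lemma~\ref{lem:per-step-bound} and being careful about the combinatorics of how each per-item perturbation $x_j^k$ contributes to the double sum. Let $\tilde{Z}'$ denote the overall performance of $\tilde{\mu}$ under a fixed adversary $A \in \mathcal{A}_\epsilon$. First I would apply the triangle inequality to the average of per-step differences:
\[
|\tilde Z - \tilde Z'| \le \frac{1}{t}\sum_{i=1}^t \bigl|\tilde f_i(W_i) - \tilde f_i(W_i')\bigr|.
\]
Then I invoke Lemma~\ref{lem:per-window-bound} on each summand to obtain
\[
|\tilde Z - \tilde Z'| \le \frac{1}{t}\sum_{i=1}^t\sum_{j=i-s+1}^{i}\psi\bigl(d(x_j,x_j^{\,i+1-j})\bigr), \qquad s=\min(i,w).
\]

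The next step is the key combinatorial observation that replaces the ``each term appears at most $w$ times'' argument from Theorem~\ref{thm:performance-bound}. Reindex the double sum by $(j,k)$ with $k=i+1-j$, i.e.\ $k$ is the position of the item in the window at time $i=j+k-1$. By the definition of $W_i'$, the perturbed copy $x_j^k$ appears in exactly one window, namely $W_{j+k-1}'$, and only when $j+k-1\le t$. Hence
\[
\sum_{i=1}^t\sum_{j=i-s+1}^{i}\psi\bigl(d(x_j,x_j^{\,i+1-j})\bigr)
= \sum_{j=1}^t\sum_{k=1}^{\min(w,\,t-j+1)}\psi\bigl(d(x_j,x_j^k)\bigr)
\le \sum_{j=1}^t\sum_{k=1}^{w}\psi\bigl(d(x_j,x_j^k)\bigr),
\]
where the last inequality just fills in the boundary terms using $\psi\ge 0$.

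Finally, using the concavity of $\psi$ and Jensen's inequality on the $wt$ summands,
\[
\frac{1}{wt}\sum_{j=1}^t\sum_{k=1}^{w}\psi\bigl(d(x_j,x_j^k)\bigr)
\le \psi\!\left(\frac{1}{wt}\sum_{j=1}^t\sum_{k=1}^{w} d(x_j,x_j^k)\right)
\le \psi(\epsilon)
\]
by constraint~(\ref{eq:adv_constraint_window}) and monotonicity of $\psi$. Multiplying by $w$ and dividing by $t$ gives $|\tilde Z - \tilde Z'| \le w\,\psi(\epsilon)$, and taking the worst adversary yields $|\tilde Z - \tilde Z_\epsilon| \le w\,\psi(\epsilon)$.

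The main obstacle I expect is step two: the bookkeeping to see that each $(j,k)$ pair appears \emph{exactly once} (rather than up to $w$ times, as in the single-attack model). Once that substitution $k=i+1-j$ is made and the outer sum is expanded to the full $w\cdot t$ rectangle using non-negativity of $\psi$, the averaging constraint matches the indices of Jensen's inequality cleanly and the certificate falls out with the same $w\psi(\epsilon)$ rate as in the weaker threat model.
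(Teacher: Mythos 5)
Your proof is correct and follows essentially the same route as the paper's: triangle inequality, Lemma~\ref{lem:per-window-bound} per step, reindexing the double sum over $(j,k)$ pairs, and then Jensen's inequality with the concavity of $\psi$ and constraint~(\ref{eq:adv_constraint_window}). Your explicit bookkeeping that each perturbed copy $x_j^k$ contributes exactly once (with the rectangle completed via $\psi \ge 0$) is in fact a cleaner justification of the step the paper writes as a bare inequality $\sum_{j,k}\psi(d(x_{j-k+1},x_{j-k+1}^k)) \le \sum_{j,k}\psi(d(x_j,x_j^k))$.
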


The proof is available in Appendix~\ref{proof:performance-bound-each-window}.

\section{Experiments}

\begin{figure}[t]
     \centering
    \subfigure[Speech keyword detection]{\includegraphics[width=0.9\linewidth]{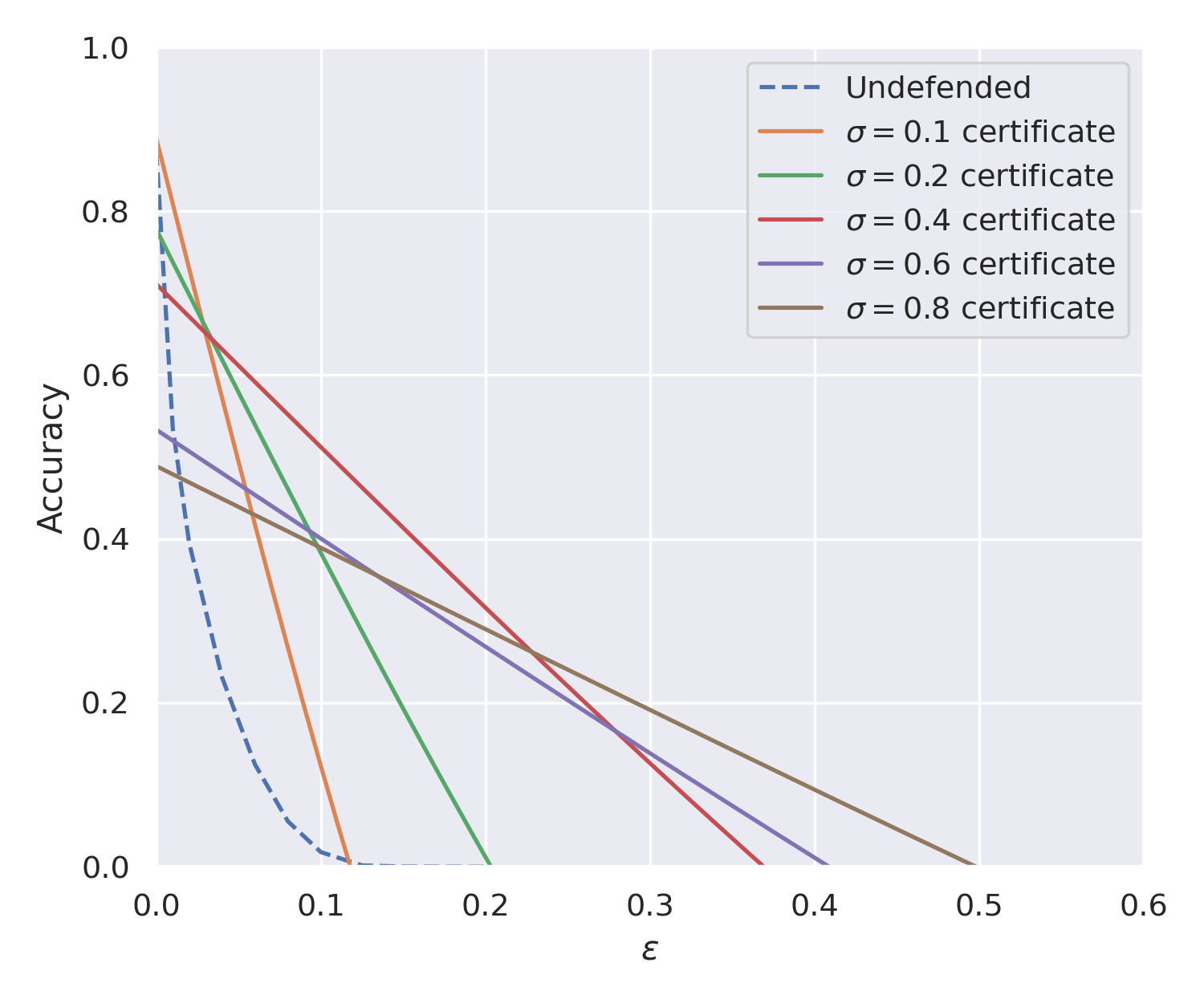}\label{fig:speech1}}\\
    \subfigure[Human activity recognition]{\includegraphics[width=0.9\linewidth]{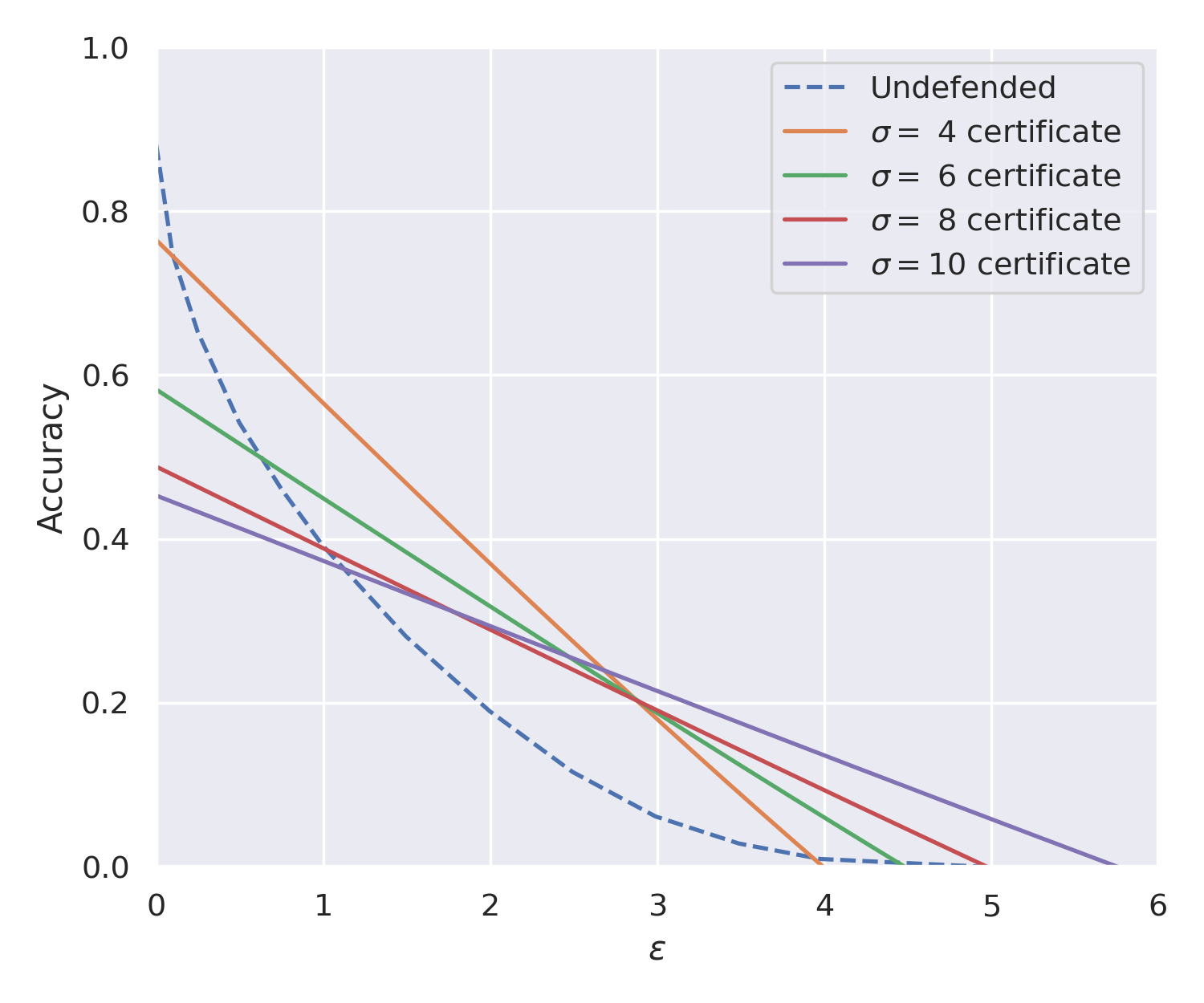}\label{fig:har1}}
        \caption{Certificates against online adversarial attacks for varying smoothing noises. Here we can perturb each input only once. The average size of perturbation is computed as per equation \ref{eq:adv_constraint}.}
        \label{fig:certificates1}
        \vspace{-3mm}
\end{figure}

We test our certificates for two streaming tasks -- speech keyword detection and human activity recognition. We use a subset of the Speech commands dataset \citep{speechcommandsv2} for our speech keyword detection task. The subset we use contains ten keyword classes, corresponding to utterances of numbers from zero to nine recorded at a sample rate of 16 kHz. This dataset also contains noise clips such as audio of running tap water and exercise bike. We add these noise clips to the speech audio to simulate real-world scenarios and stitch them together to generate longer audio clips. We use the UCI HAR dataset \citep{reyes2012uci} for human activity recognition. This contains a 6-D triaxial accelerometer and gyroscope readings measured with human subjects. The objective in HAR is to recognize various human activities based on sensor readings. The UCI HAR dataset contains signals recorded at 50 Hz that correspond to six human activities such as standing, sitting, laying, walking, walking up, and walking down.

We use the M5 network described in \cite{m5} with an SGD optimizer and an initial learning rate of 0.1, which we anneal using a cosine scheduler. For the speech detection task, we train a M5 network with 128 channels for 30 epochs with a batch size of 128. For the human activity recognition task, we use a M5 network with 32 channels for 30 epochs with a batch size of 256. We apply isotropic Gaussian noise for smoothing and use the $\ell_2$-norm to define the average distance measure $d$. For the speech keyword detection task, we use smoothing noises with standard deviations of 0.1, 0.2, 0.4, 0.6, and 0.8. For the human activity recognition task, we use smoothing noises with standard deviations of 4, 6, 8, and 10. See Appendix \ref{app:exps} for more details on the experiments. We compute certificates for both scenarios, where the input is attacked only once and where each window can be attacked with the ability to re-attack inputs. These experiments show that our certificates provide 
 meaningful guarantees against adversarial perturbations.

\begin{algorithm}[tb]
   \caption{Our streaming attack}
   \label{alg:onlineattack}
\begin{algorithmic}
   \STATE {\bfseries Input:} time-step $j$, clean inputs $x_j, x_{j-1}, ..., x_{j-w+1}$, perturbed inputs $x'_{j-1}, ..., x'_{j-w+1}$, attack budget $\epsilon$, search parameter $\alpha \in \mathbb{N}$.
   \STATE $d_{j-1} = \sum_{i=1}^{j-1} d(x_i, x_i')$
   \STATE $budget_j = j \epsilon - d_{j-1}$
   \FOR{$i=0$ {\bfseries to} $\alpha$}
   \STATE $\epsilon' = \frac{i}{\alpha} \cdot budget_j$
   \STATE $x = \arg \min_{x} f_j(x,...,x_{j-w+1}')$ s.t. $d(x, x_j) \leq \epsilon'$
   \IF{$f_j(x_j', ...,x_{j-w+1}') = 0$}
   \STATE $x_j' = x$
   \STATE break
   \ELSE 
   \STATE $x_j' = x_j$
   \ENDIF
   \ENDFOR
\end{algorithmic}
\end{algorithm}

\subsection{Attacking an input only once} \label{sec:attackonce}

\begin{figure}[t]
     \centering
    \subfigure[Speech keyword detection]{\includegraphics[width=0.9\linewidth]{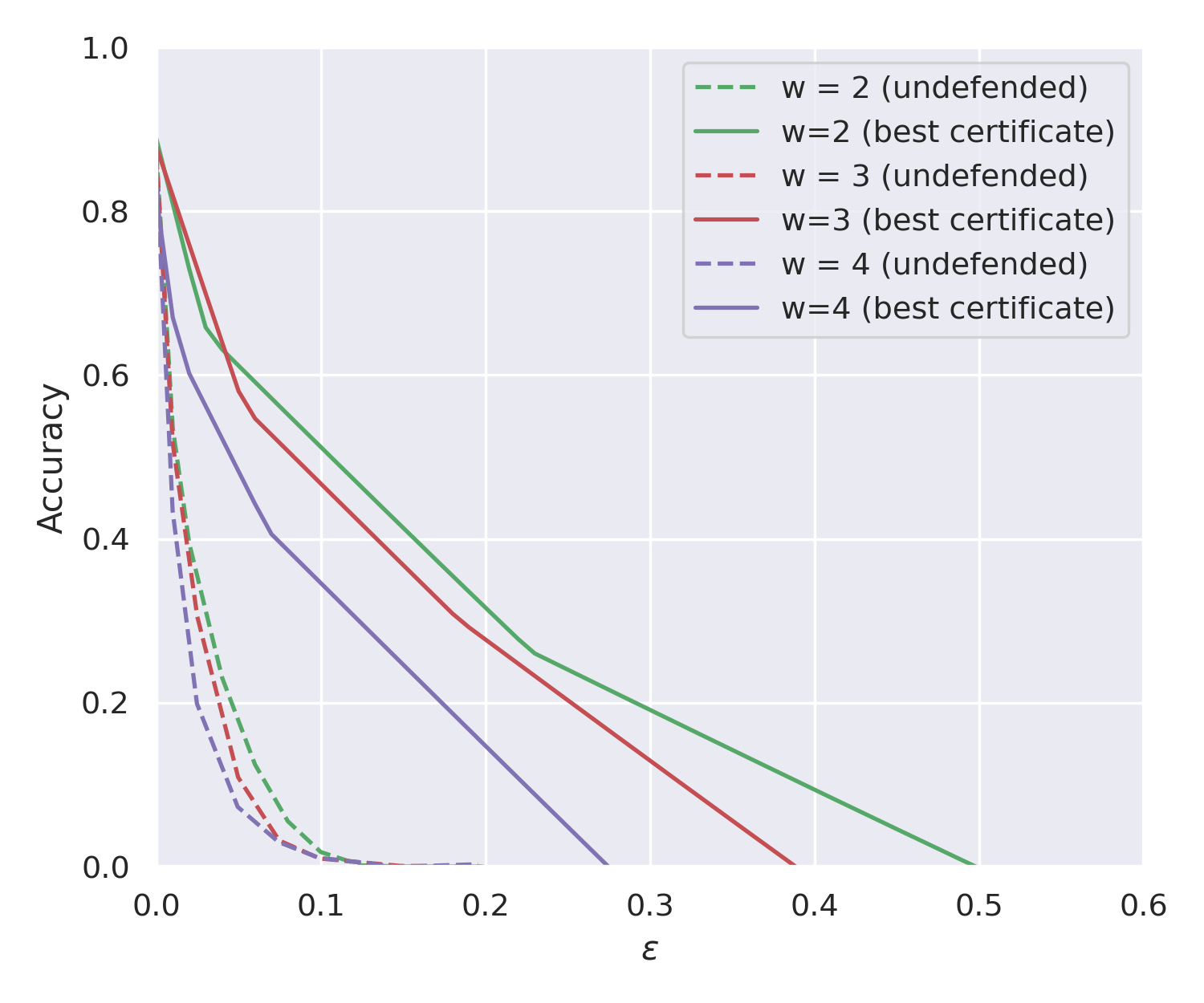}\label{fig:bestspeech}}\\
    \subfigure[Human activity recognition]{\includegraphics[width=0.9\linewidth]{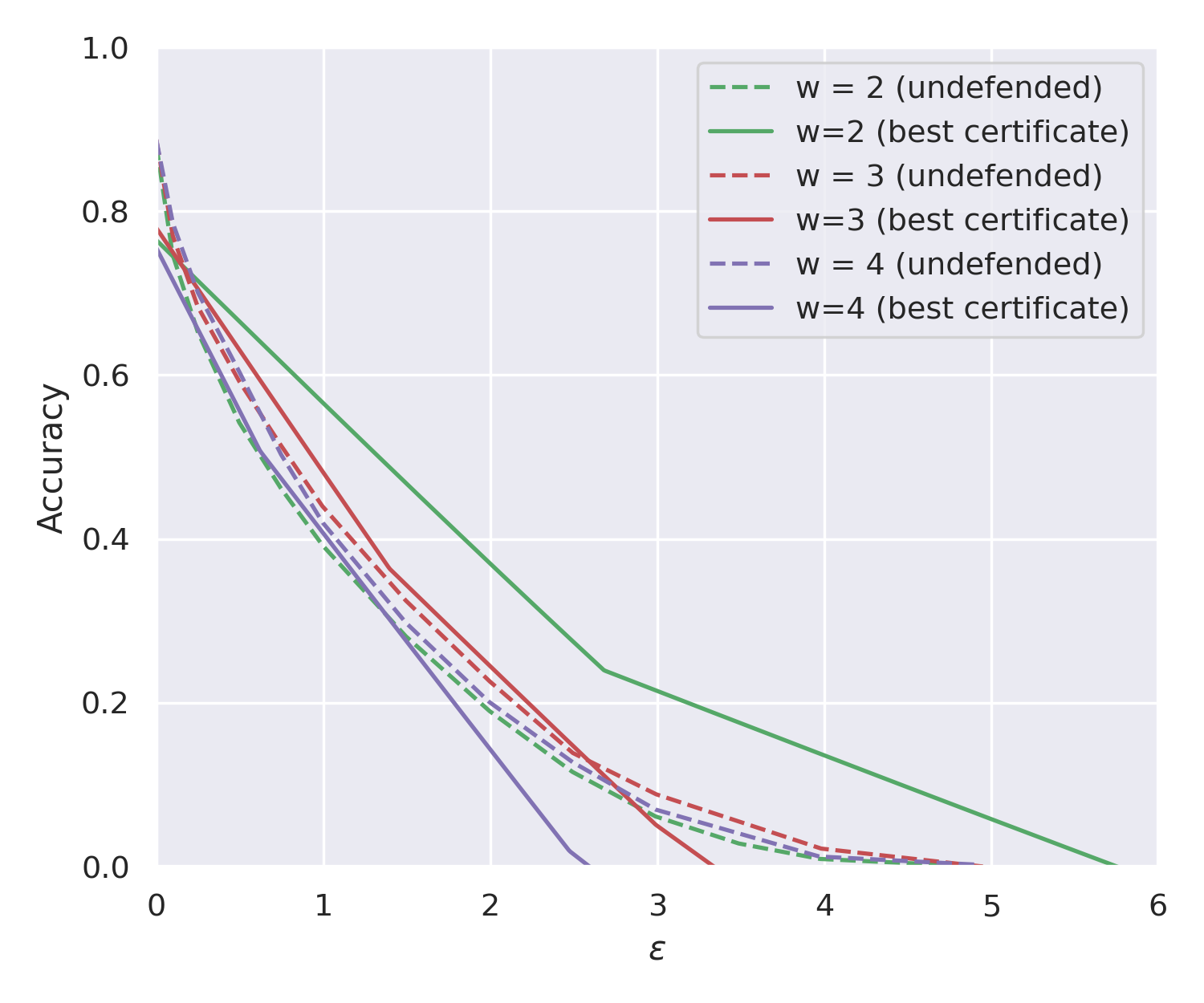}\label{fig:besthar}}
        \caption{Best certificates across varying smoothing noises for different window sizes. Streaming models with smaller window sizes are more robust to adversarial perturbations.}
        \label{fig:bestcertificates1}
        \vspace{-6mm}
\end{figure}

\begin{figure}[t]
     \centering
    \subfigure[Speech keyword detection]{\includegraphics[width=0.9\linewidth]{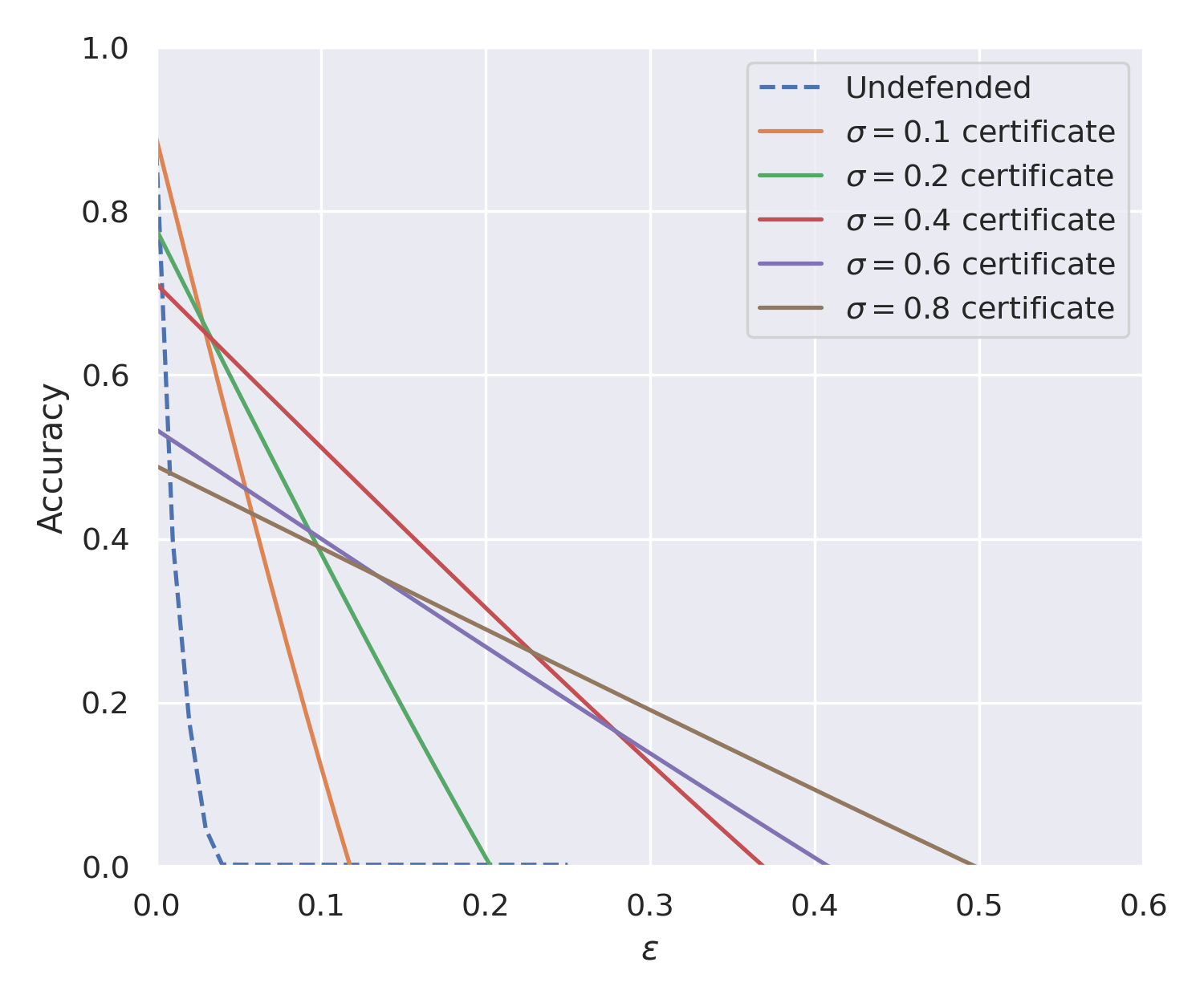}\label{fig:speech2}}\\
    \subfigure[Human activity recognition]{\includegraphics[width=0.9\linewidth]{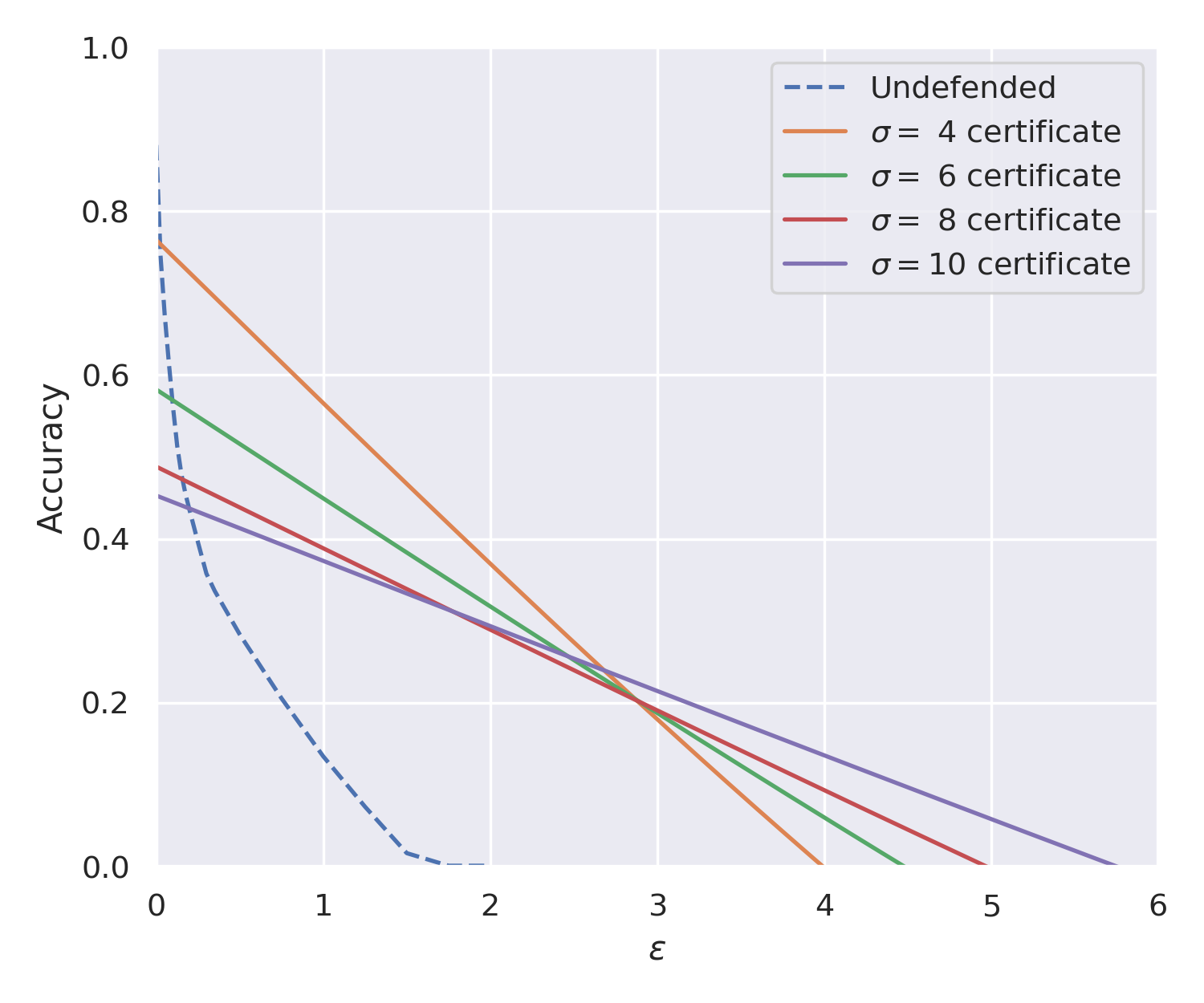}\label{fig:har2}}
        \caption{Certificates against online adversarial attacks for varying smoothing noises. Here we attack each window with the ability to re-attack inputs. The average size of perturbation is computed as per equation \ref{eq:adv_constraint_window}.}
        \label{fig:certificates2}
        \vspace{-6mm}
\end{figure}

We evaluate the robustness of undefended models using a custom-made attack that is constrained by the $\ell_2$-norm budget, as described in equation \ref{eq:adv_constraint}. To adhere to this constraint at each time-step $j$, the attacker must only perturb the input $x_j$, since the previous inputs $(x_{j-w+1},...,x_{j-1})$ have already been perturbed. This creates a significant challenge in creating a strong adversary. We design an adversary that only perturbs the last input $x_j$ at every time-step $j$ using projected gradient descent to minimize $f_j$. In our experiments, we set $f_j = 1$ if the model outputs the correct class and $f_j = 0$ when the model misclassifies. We linearly search using grid search parameter $\alpha$ for the smallest distance $d(x_j, x_j')$ such that the input $(x_{j-w+1}',...,x_j')$ leads to a misclassification at time-step $j$. We perturb $x_j$ if $(x_{j-w+1}',...,x_j')$ leads to misclassification and the average distance budget at time-step $j$ is less than $\epsilon$. Else, we do not perturb $x_j$. In this manner, our attack perturbs the streaming input in a greedy fashion. See Algorithm \ref{alg:onlineattack} for details.

We conduct our streaming attack on the keyword recognition task with a window size of $w=2$, where each input $x_j$ is a 4000-dimensional vector in the range [0,1]. We also perform the attack on the human activity recognition task with $w=2$, where each input $x_j$ is a 250x6-dimensional matrix. We use search parameter $\alpha = 15$. We plot the results of our certificates for various smoothing noises (see Figure \ref{fig:certificates1}). Note that the attack budget $\epsilon$ is calculated as per the definition in equation \ref{eq:adv_constraint}. In Figure \ref{fig:bestcertificates1}, we also plot our best certificates across various smoothing noises for different window sizes $w$. This plot supports our theory that streaming models with smaller window sizes are more robust to adversarial perturbations. Figures \ref{fig:attacksmoothhar} and \ref{fig:attacksmoothspeech} in Appendix \ref{app:attacksmooth} show that the empirical performance of smooth models after the online adversarial attack is lower bound by our certificates. These plots validate our certificates.

\subsection{Attacking each window}

Now, we perform experiments for the attack setting described in Section \ref{sec:attackingeahcwindow}. Note that here we need to calculate the attack budget $\epsilon$ based on equation \ref{eq:adv_constraint_window}. In this setting, we can re-attack an input for every window, making it a stronger attack. To attack the undefended models, we search for window perturbations that lead to misclassification using a minimum distance budget. Similar to our previous attack in Section \ref{sec:attackonce}, we only perturb a window at time-step $j$ if the average window distance at time-step $j$ is less than $\epsilon$. Also, we do not perturb a window if the window can not be perturbed to reduce the performance $f_j$. In Figure \ref{fig:certificates2}, we plot our certificates for this attack setting along with the accuracy of the undefended model for different attack budgets. These experiments show that our certificates produce meaningful performance guarantees against adversarial perturbations even if an attacker has the ability to re-attack the inputs. Figure \ref{fig:attacksmoothhar_2} in Appendix \ref{app:attacksmooth} shows that the empirical performance of smooth models after the online adversarial attack is lower bound by our certificates. These plots validate our certificates.

\section{Conclusion}
In this work, we design provable robustness guarantees for streaming machine learning models with a sliding window.
Our certificates provide a lower bound on the average performance of a streaming DNN model in the presence of an adversary.
The adversarial budget in our threat model is defined in terms of the average size of the perturbations added to the input items across the entire stream.
This allows the adversary to allocate a different budget to each input item and leads to a more general threat model than the static setting.
Our certificates are independent of the stream length and can handle long, potentially infinite, streams.
They are also applicable for adversaries that are allowed to re-attack past inputs leading to strong robustness guarantees covering a wide range of attack strategies.

Our robustness procedure simply augments the inputs with random noise.
Unlike conventional randomized smoothing techniques, our method only requires one noised sample per prediction keeping the computational requirements of the DNN model unchanged.
It does not make any assumptions about the DNN model such as Lipschitz continuity or a specific architecture and is applicable for conventional DNNs that are several layers deep.
Our experimental results show that our certificates can obtain meaningful robustness guarantees for real-world streaming applications.
Our results show that the certified performance of a robust model depends only on the window size and smaller windows lead to models that are provably more robust than larger windows.

To the best of our knowledge, this is the first attempt at designing adversarial robustness certificates for the streaming setting.
We note that our robustness guarantees are not proven to be tight and could be improved upon by future work.
We hope our work inspires further investigation into provable robustness methods for streaming ML models.

\section{Acknowledgements}
This project was supported in part by HR001119S0026 (GARD), NSF CAREER AWARD 1942230, ONR YIP award N00014-22-1-2271, Army Grant No. W911NF2120076, Meta grant 23010098, a capital one grant, NIST 60NANB20D134, and the NSF award CCF2212458.

\bibliography{references}
\bibliographystyle{icml2023}

\newpage
\appendix
\onecolumn

\section{Proof of Lemma~\ref{lem:per-window-bound}}
\label{proof:per-window-bound}
\begin{statement}
The change in each $\tilde{f}_j$ under an adversary in $\mathcal{A}_\epsilon$ is bounded as
\[|\tilde{f}_j(W_j) - \tilde{f}_j(W_j')| \leq \sum_{i = j-w+1}^{j} \psi(d(x_i, x_i^{j+1-i})).\]
\end{statement}

\begin{proof}
The left-hand side of the above inequality can be re-written as:
\begin{align*}
    |\tilde{f}_j(W_j) - \tilde{f}_j(W_j')| &= |\tilde{f}_j(x_{j-w+1}, \ldots, x_j) - \tilde{f}_j(x_{j-w+1}^w, \ldots, x_j^1)|\\
    &= |\tilde{f}_j(x_{j-w+1}, \ldots, x_{j-1}, x_j) - \tilde{f}_j(x_{j-w+1}, \ldots, x_{j-1}, x_j^1)\\
    &+\tilde{f}_j(x_{j-w+1}, \ldots, x_{j-1}, x_j^1) - \tilde{f}_j(x_{j-w+1}^w, \ldots, x_{j-1}^2, x_j^1)|\\
    &= \Bigg| \sum_{k = 1}^w \tilde{f}_j(x_{j-w+1}, \ldots x_{j-k+1}, x_{j-k+2}^{k-1}, \ldots, x_j^1) - \tilde{f}_j(x_{j-w+1}, \ldots x_{j-k+1}^k, x_{j-k+2}^{k-1}, \ldots, x_j^1) \Bigg|\\
    &\leq \sum_{k = 1}^w \left|\tilde{f}_j(x_{j-w+1}, \ldots x_{j-k+1}, x_{j-k+2}^{k-1}, \ldots, x_j^1) - \tilde{f}_j(x_{j-w+1}, \ldots x_{j-k+1}^k, x_{j-k+2}^{k-1}, \ldots, x_j^1)\right|
\end{align*}
The two terms in each summand differ only in the $(j-k+1)$-th input.
Thus, it can be written as the difference of the expected value of some $[0, 1]$-function $q$ under the distributions $\mathcal{S}(x_{j-k+1})$ and $\mathcal{S}(x_{j-k+1}^k)$, i.e., $|\mathbb{E}_{\tilde{x}_{j-k+1} \sim \mathcal{S}(x_{j-k+1})}[q(\tilde{x}_{j-k+1})] - \mathbb{E}_{\tilde{x}_{j-k+1}^k \sim \mathcal{S}(x_{j-k+1}^k)}[q(\tilde{x}_{j-k+1}^k)]|$
which can be upper bounded by the total variation between $\mathcal{S}(x_{j-k+1})$ and $\mathcal{S}(x_{j-k+1}^k)$.
Therefore, from condition~(\ref{eq:tv_bound}), we have:
\begin{align*}
    |\tilde{f}_j(x_{j-w+1}, \ldots, &x_{j-k+1}, x_{j-k+2}^{k-1}, \ldots, x_j^1) - \tilde{f}_j(x_{j-w+1}, \ldots x_{j-k+1}^k, x_{j-k+2}^{k-1}, \ldots, x_j^1)|\\
    &\leq \mathsf{TV}(\mathcal{S}(x_{j-k+1}), \mathcal{S}(x_{j-k+1}^k)) \leq \psi(d(x_{j-k+1}, x_{j-k+1}^k)).
\end{align*}
This proves the statement of the lemma.
\end{proof}

\section{Proof of Theorem~\ref{thm:performance-bound-each-window}}
\label{proof:performance-bound-each-window}

\begin{statement}
    Let $\tilde{Z}_{\epsilon}$ to be the minimum $\tilde{Z}$ for an adversary in $\mathcal{A}_\epsilon$. Then,
\[|\tilde{Z} - \tilde{Z}_\epsilon | \leq w \psi(\epsilon).\]
\end{statement}

\begin{proof}
Let $\tilde{Z}'$ be the overall performance of $\tilde{M}$ under an adversary. Then,
\begin{align*}
    |\tilde{Z} - \tilde{Z}'| &= \left| \frac{\sum_{j=1}^t \tilde{f}_j(W_j)}{t} - \frac{\sum_{j=1}^t \tilde{f}_j(W_j')}{t} \right|\\
    &\leq \frac{\sum_{j=1}^t |\tilde{f}_j(W_j)) - \tilde{f}_j(W_j')|}{t}\\
    &\leq \sum_{j=1}^t \sum_{k = 1}^w \psi(d(x_{j-k+1}, x_{j-k+1}^k)) / t \tag{from lemma~\ref{lem:per-window-bound}}\\
    &\leq \sum_{j=1}^t \sum_{k = 1}^w \psi(d(x_j, x_j^k)) / t\\
    &= w \sum_{j=1}^t \sum_{k = 1}^w \psi(d(x_j, x_j^k)) / w t\\
    &\leq w \psi \left( \sum_{j=1}^t \sum_{k = 1}^w d(x_j, x_j^k)) / w t \right) \tag{$\psi$ is concave and Jensen's inequality}
\end{align*}
Therefore, for the worst-case adversary in $\mathcal{A}_\epsilon$, we have
\[|\tilde{Z} - \tilde{Z}_\epsilon | \leq w \psi(\epsilon)\]
from constraint~(\ref{eq:adv_constraint}) on the average distance between the original and perturbed inputs.
\end{proof}

\section{Function $\psi$ for Different Distributions}
\label{sec:psi-functions}
For an isometric Gaussian distribution,
\[\TV (\mathcal{N}(x_i, \sigma^2 I), \mathcal{N}(x_i', \sigma^2 I)) = \erf (\|x_i - x_i'\|_2/2 \sqrt{2} \sigma).\]

\begin{proof}
Due to the isometric symmetry of the Gaussian distribution and the $\ell_2$-norm, the total variation between the two distributions is the same as when they are separated by the same $\ell_2$-distance but only in the first coordinate.
It is equivalent to shifting a univariate normal distribution by the same amount.
Therefore, the total variation between the two distributions is equal to the difference in the probability of a normal random variable with variance $\sigma^2$ being less than $\|x_i - x_i'\|_2/2$ and $-\|x_i - x_i'\|_2/2$, i.e., $\Phi(\|x_i - x_i'\|_2/2 \sigma) - \Phi(-\|x_i - x_i'\|_2/2\sigma)$ where $\Phi$ is the standard normal CDF.
\begin{align*}
    \TV (\mathcal{N}(x_i, \sigma^2 I), \mathcal{N}(x_i', \sigma^2 I)) &= \Phi(\|x_i - x_i'\|_2/2 \sigma) - \Phi(-\|x_i - x_i'\|_2/2 \sigma)\\
    &= 2\Phi(\|x_i - x_i'\|_2/2 \sigma) - 1\\
    &= 2 \left( \frac{1 + \erf( \|x_i - x_i'\|_2 / 2 \sqrt{2} \sigma)}{2} \right) - 1\\
    &= \erf( \|x_i - x_i'\|_2 / 2 \sqrt{2} \sigma).
\end{align*}
\end{proof}

For a uniform smoothing distribution $\mathcal{U}(x_i, b)$ between ${x_i}_j-b/2$ and ${x_i}_j + b/2$ in each dimension $j$ of $x_i$ for some $b \geq 0$, $\TV (\mathcal{U}(x_i, b), \mathcal{U}(x_i', b)) \leq \|x_i - x_i'\|_1/b$.
When $\|x_i - x_i'\|_1$ is constrained, the overlap between $\mathcal{U}(x_i, b)$ and $\mathcal{U}(x_i', b)$ is minimized when the shift is only along one dimension.

\section{Comparison with Existing Certificates for Static Tasks}
\label{sec:compare_static}
\begin{wrapfigure}{r}{0.5\textwidth}
    \vspace{-6mm}
    \hspace{-8mm}
    \includegraphics[width=0.6\textwidth,trim={0 0 0 1cm},clip]{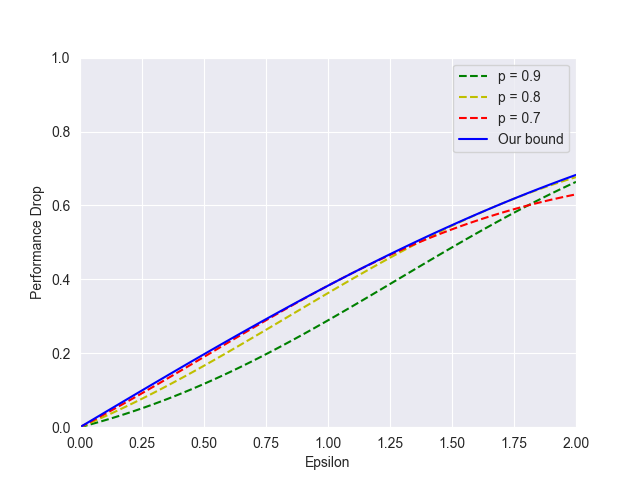}
    \vspace{-6mm}
    \caption{Comparison between our bound and \citet{cohen19}'s certificate for an $\ell_2$ adversary and a Gaussian smoothing distribution. The solid blue curve corresponds to our bound and the dashed curves represent bound~(\ref{eq:cohen_bound}) for different values of $p$.
    We keep $\sigma=1$ as it only has a scaling effect along the $x$-axis.}
    \vspace{-4mm}
    \label{fig:bound_compare}
\end{wrapfigure}
In this section, we compare our bound when applied to the static setting of classification, i.e., window size $w = 1$ in bound~(\ref{eq:main_result}), to that obtained by \citet{cohen19} for an $\ell_2$ adversary and a Gaussian smoothing distribution.
As discussed in Appendix~\ref{sec:psi-functions}, the $\psi$ function for this case takes the form of the Gauss error function $\erf$.
Thus our bound on the drop in the smoothed model's performance against an $\ell_2$ adversary is given by:
\[|\tilde{Z} - \tilde{Z}_\epsilon | \leq \erf(\epsilon/2\sqrt{2}\sigma).\]
\citet{cohen19}'s certificate bounds the worst-case adversarial performance as a function of the clean performance.
If the probability of predicting the correct class is $p$ on the original input, the probability of that in the presence of an adversary is bounded by $\Phi (\Phi^{-1}(p) - \epsilon/\sigma)$.
Therefore, the performance drop $\Delta p$ is bounded by:
\begin{equation}
\label{eq:cohen_bound}    
\Delta p \leq p - \Phi \left(\Phi^{-1}(p) - \frac{\epsilon}{\sigma} \right).
\end{equation}

Figure~\ref{fig:bound_compare} compares the two bounds for different values of $p$.
We keep $\sigma=1$ as it only has a scaling effect along the $x$-axis.
The bound from the $\ell_2$ certificate by \citet{cohen19} is tighter than ours, mainly because it takes the clean performance $p$ of the smoothed model into account.
However, the gap between the two bounds is small in the range where $\epsilon$ goes from 0 to 2, by which point the certified performance drops by more than 60\%.
Thus for most meaningful robustness guarantees, our certificates are almost at par with the best-known $\ell_2$ certificates.
The key advantage of our certificates over those for the static setting is that they are applicable for an adaptive adversary that can allocate different attack budgets for different input items in the stream.

\section{Experimental details}
\label{app:exps}

We use a single NVIDIA RTX A4000 GPU with four AMD EPYC 7302P Processors. For our main experiments with UCI HAR and Speech Commands datasets, we use window size $w=2$ with inputs belonging to $\mathbb{R}^{250\times 6}$ and $\mathbb{R}^{4000}$. The UCI HAR dataset consists of long streaming inputs with sample-level annotations. For a window $W_j$, the label is the majority class that is present in that window. The signals in the HAR dataset are standardized to have mean 0 and variance 1. For the speech keyword detection task, we use a subset of the Speech commands dataset that consists of long noise clips and one-second-long speech keyword clips. The labels for each audio clip are available. We utilize all the long noise clips and clips belonging to the classes belonging speech utterances of numbers from zero to nine to make longer clips for our streaming case. We add noise clips to the keyword audios to make them more similar to real-world scenarios. Each clip is stitched together \citep{audiostitch} with arbitrarily long noise between each keyword clip. To make transitions between the audio smooth, we use exponential decays to overlap keyworrd audio clips for stitching, with noise in the background. Hence, for the speech keyword detection, we have 11 classes for labels -- zero to nine and a noise class. A window is labeled to be the majority class in that window.

For training, we use M5 networks with 32 channels for HAR. We train for 30 epochs with a bath-size of 256 using SGD with an initial learning rate of 0.1, momentum of 0.9, and weight decay of 0.0001. We use a cosine annealing learning rate scheduler. For training the robust models, we use different smoothing noises with standard deviations 4, 6, 8, and 10. For training on the keyword detection data, we use M5 networks with 128 channels for HAR. We train for 30 epochs with a bath-size of 128 using SGD with an initial learning rate of 0.1, momentum of 0.9, and weight decay of 0.0001. We use a cosine annealing learning rate scheduler. For training the robust models, we use different smoothing noises with standard deviations 0.1, 0.2, 0.4, 0.6, and 0.8. For attacking the trained models, we use PGD $\ell_2$ attacks for both the datasets. PGD is run for 100 steps with a step size of $2 \epsilon'/100$ where $\epsilon'$ is the $\ell_2$ attack budget. 

\section{Attacking the Smooth Models}
\label{app:attacksmooth}

In this section, we empirically validate our certificates by showing that the performance of the smoothed models in the presence of an adversary is lower-bounded by our certificates.
For the first set of experiments (Figures~\ref{fig:attacksmoothhar} and ~\ref{fig:attacksmoothspeech}), we consider an adversary that is allowed to attack an input item only once, as in Section \ref{sec:attackonce}. We show our results on the Human Activity Recognition dataset in Figure \ref{fig:attacksmoothhar} and the keyword detection task in Figure \ref{fig:attacksmoothspeech} for a window size of $w=2$. In Figure \ref{fig:attacksmoothhar_2}, we show our results on the HAR dataset where the adversary can attack each window separately as per equation \ref{eq:adv_constraint_window}. As seen in the plots, the empirical performance of the smooth models after the online adversarial attacks is always better than the performance guaranteed by our certificates.
By comparing Figures~\ref{fig:attacksmoothhar} and~\ref{fig:attacksmoothhar_2}, we observe that allowing the adversary to attack each window separately makes it significantly stronger and brings the adversarial performance of the smoothed model closer to the certified performance.

\begin{figure}[t]
     \centering
    \subfigure[Attacking model with smoothing noise $\sigma=4$]{\includegraphics[width=0.45\linewidth]{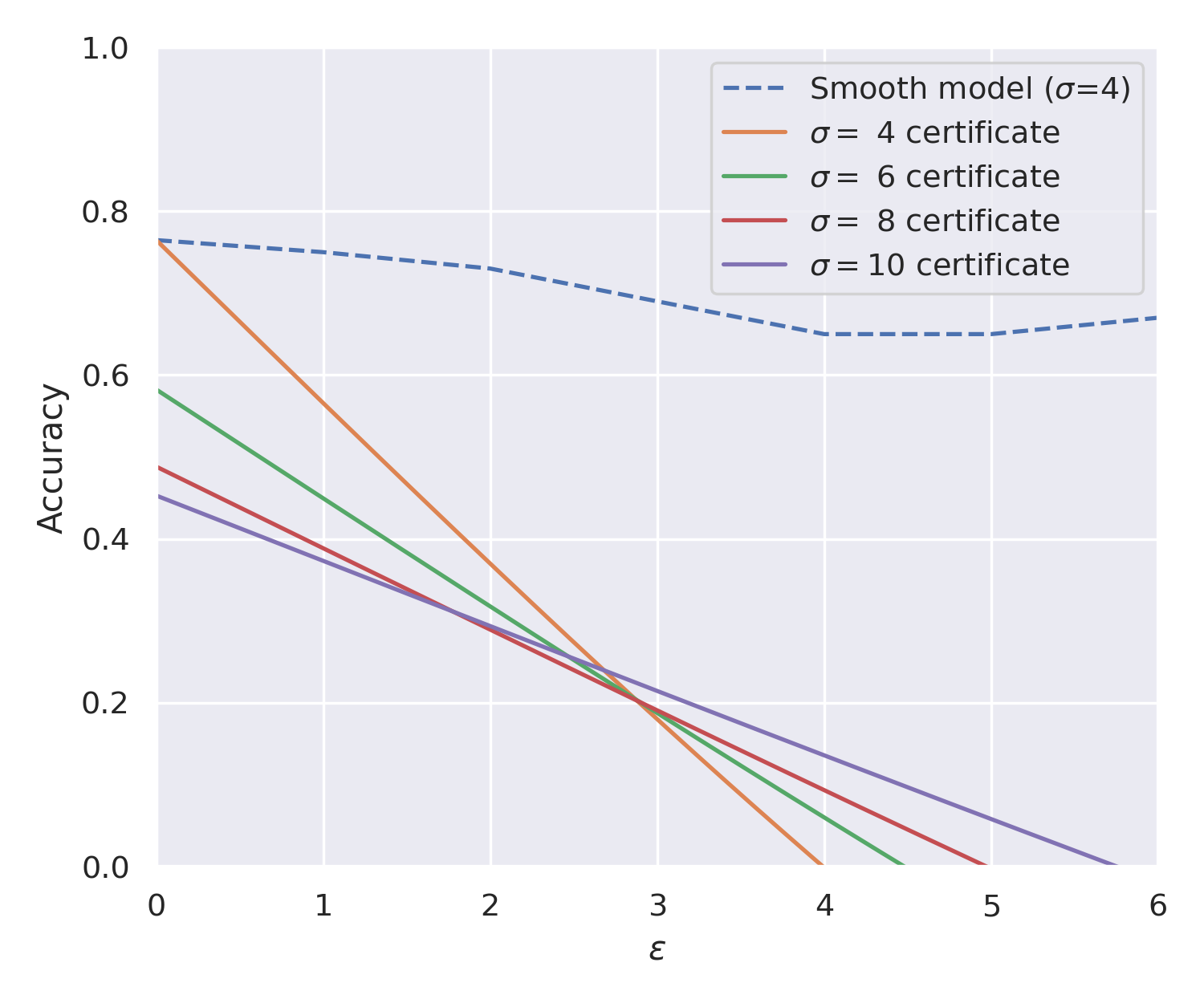}\label{fig:attacksmoothhar1}}%
    \subfigure[Attacking model with smoothing noise $\sigma=6$]{\includegraphics[width=0.45\linewidth]{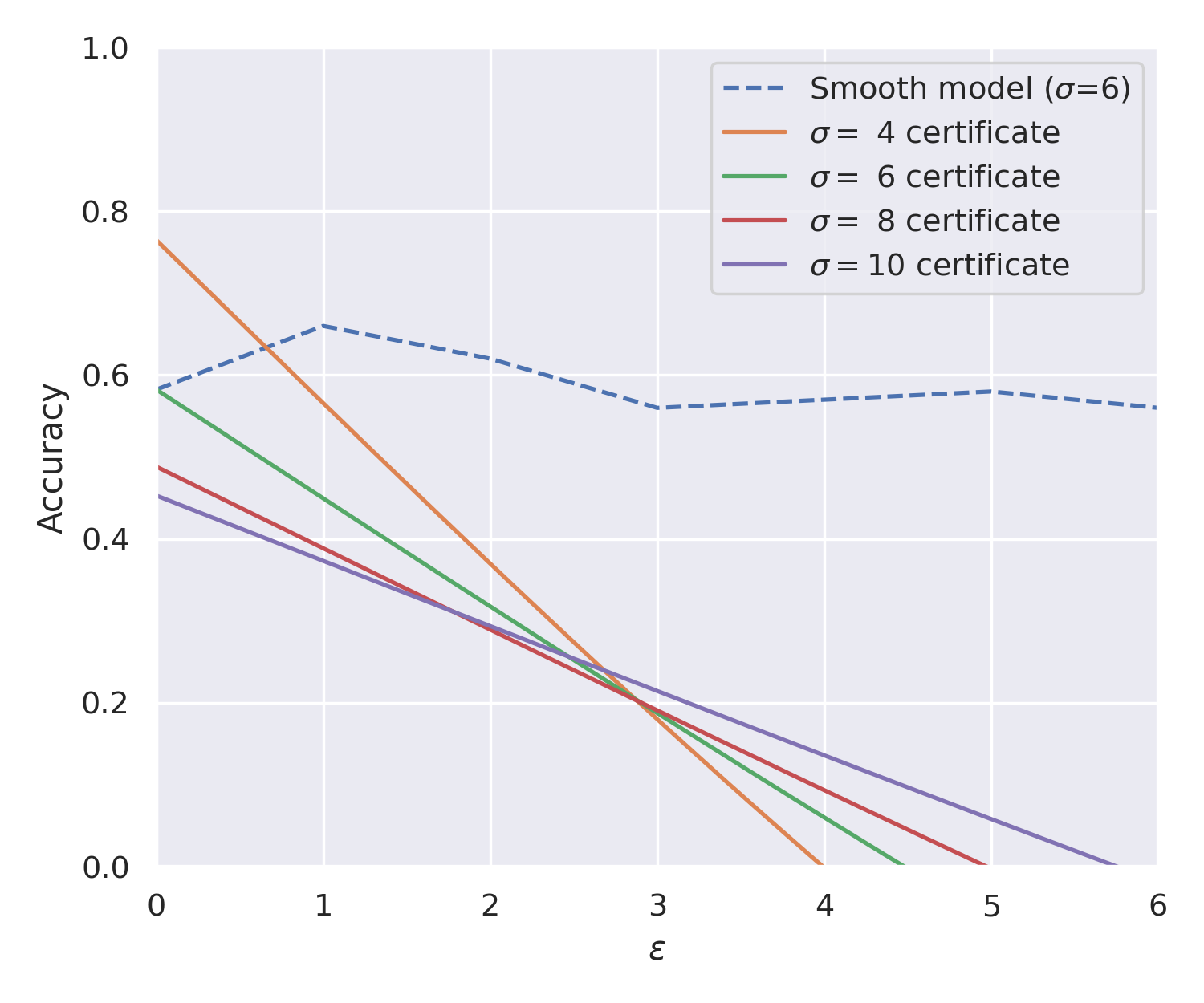}\label{fig:attacksmoothhar2}}\\
    \subfigure[Attacking model with smoothing noise $\sigma=8$]{\includegraphics[width=0.45\linewidth]{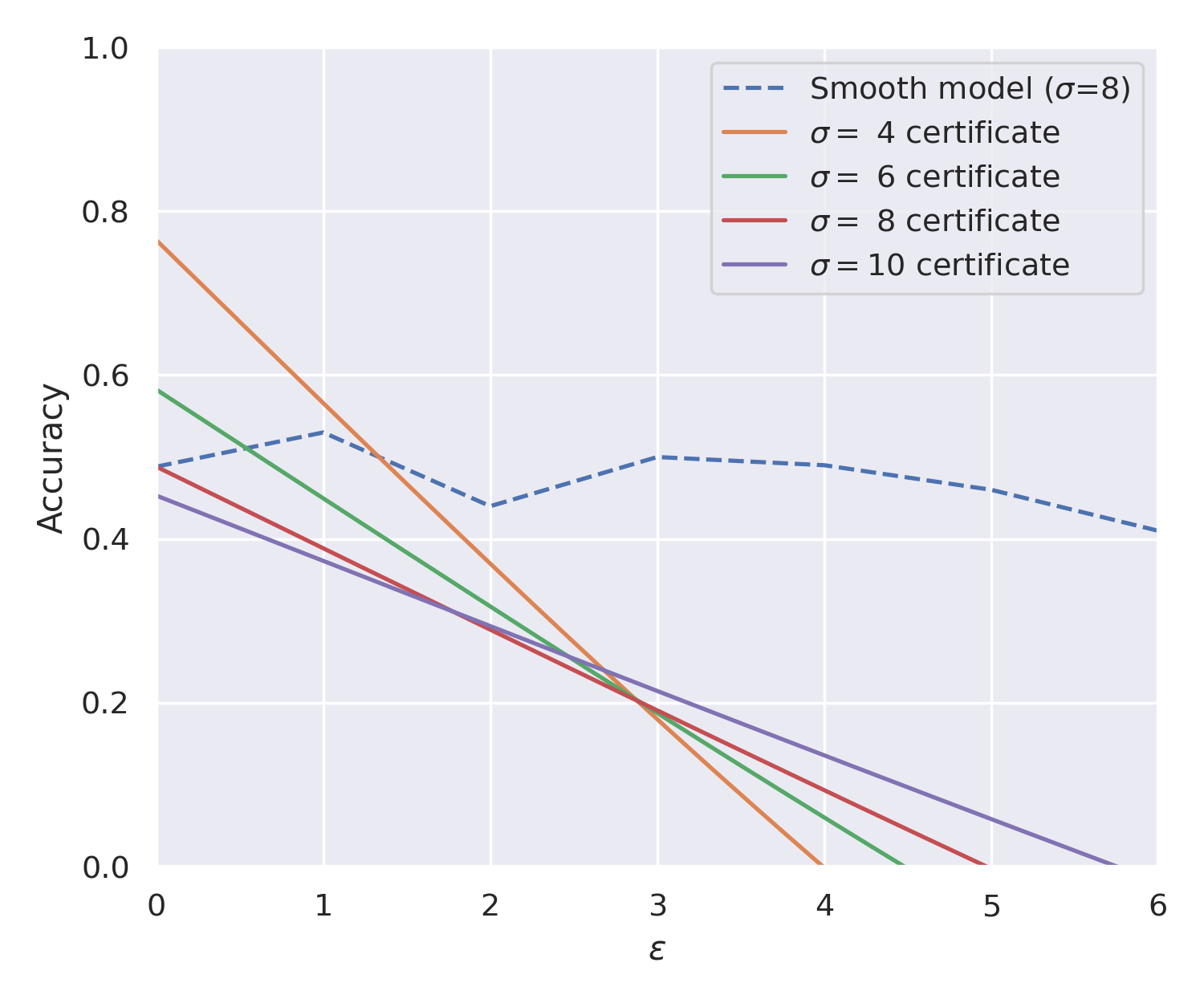}\label{fig:attacksmoothhar3}}%
    \subfigure[Attacking model with smoothing noise $\sigma=10$]{\includegraphics[width=0.45\linewidth]{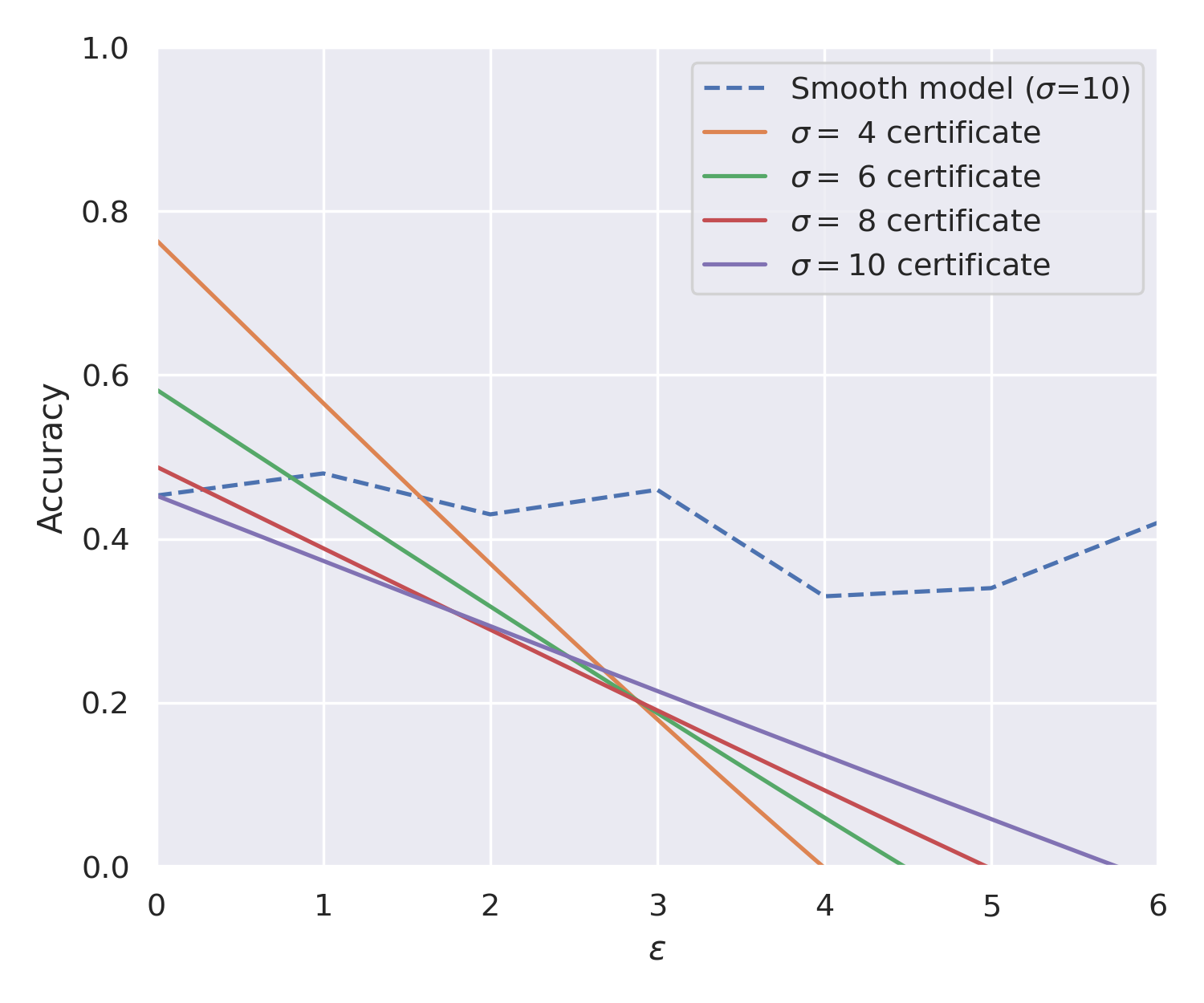}\label{fig:attacksmoothhar4}}
        \caption{Certificates against online adversarial attacks for varying smoothing noises for the human activity recognition task. We attack smooth models trained with different smoothing noises in these plots. Here we can perturb each input only once. The average size of perturbation is computed as per equation \ref{eq:adv_constraint}.}
        \label{fig:attacksmoothhar}
        \vspace{-3mm}
\end{figure}

\begin{figure}[t]
     \centering
    \subfigure[Attacking model with smoothing noise $\sigma=0.2$]{\includegraphics[width=0.45\linewidth]{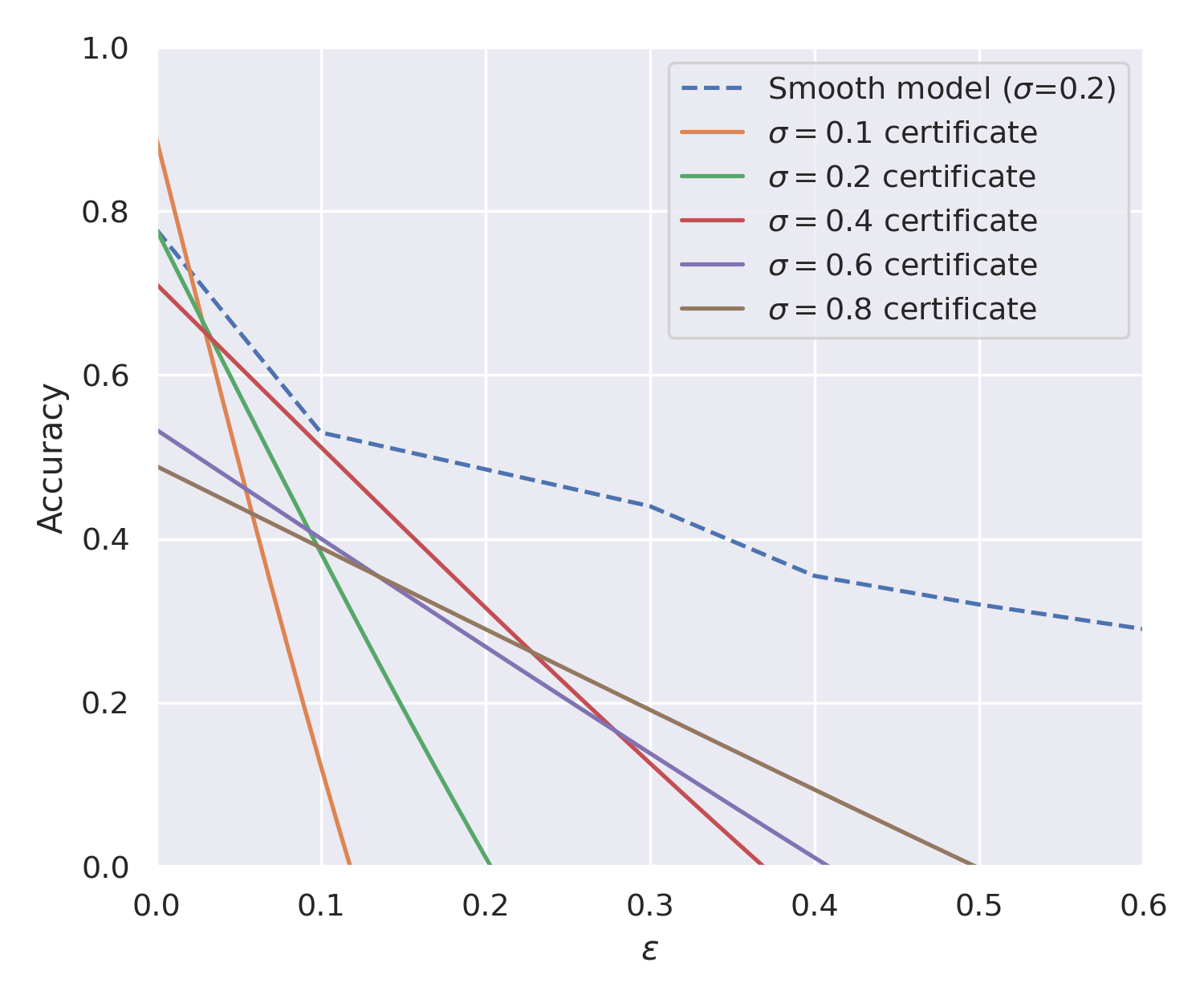}\label{fig:attacksmoothspeech2}}%
    \subfigure[Attacking model with smoothing noise $\sigma=0.4$]{\includegraphics[width=0.45\linewidth]{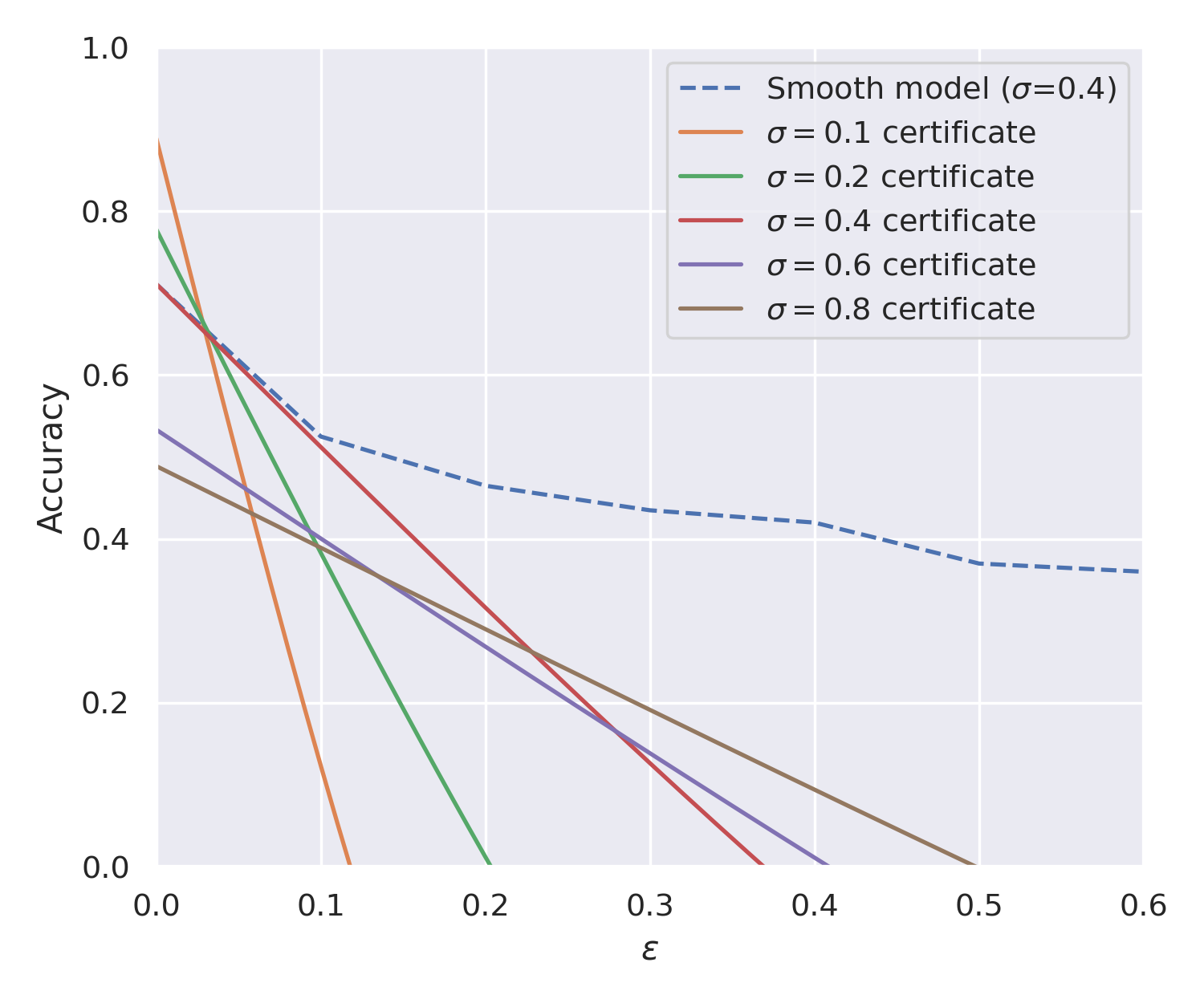}\label{fig:attacksmoothspeech6}}\\
    \subfigure[Attacking model with smoothing noise $\sigma=0.6$]{\includegraphics[width=0.45\linewidth]{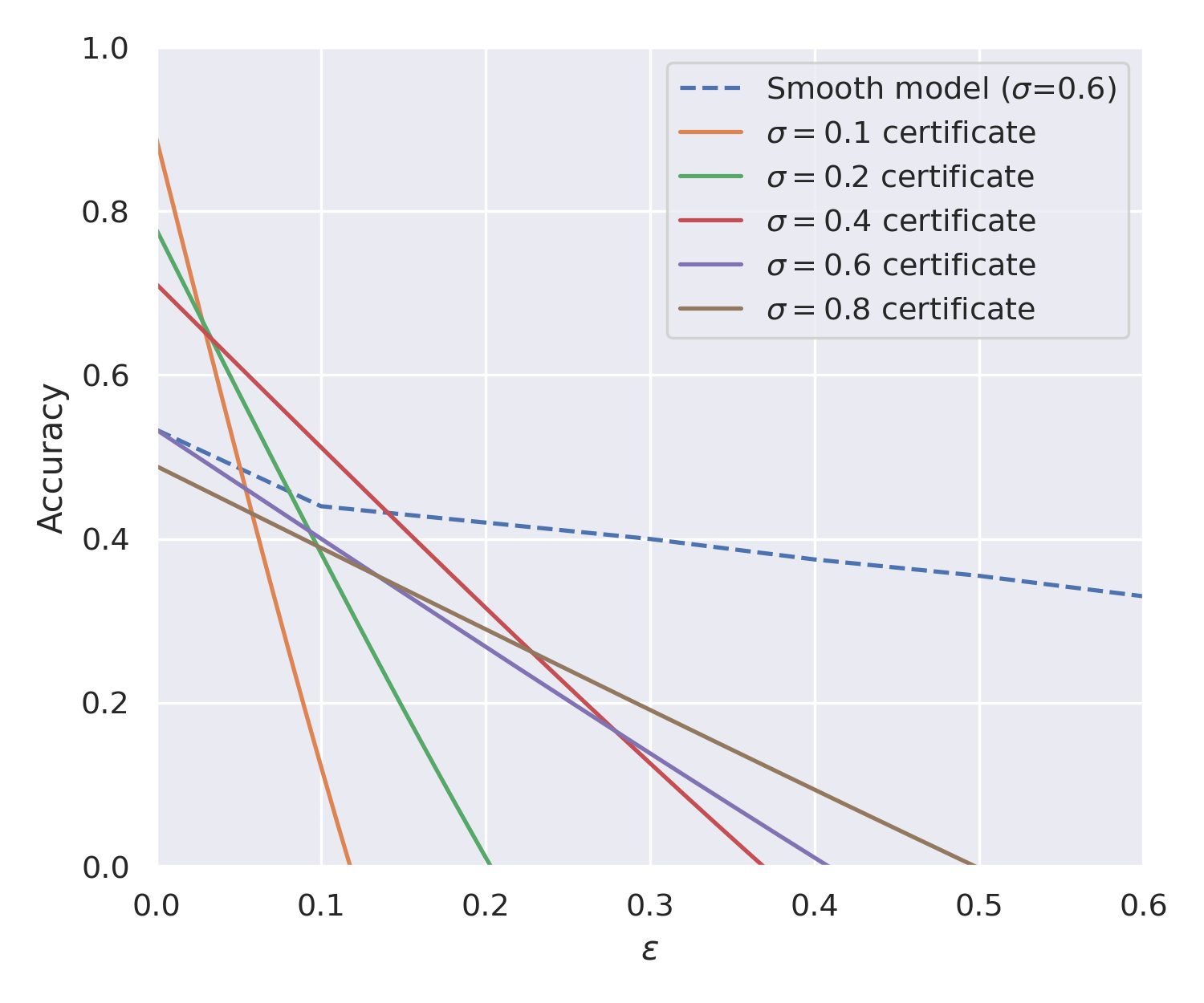}\label{fig:attacksmoothspeech8}}%
    \subfigure[Attacking model with smoothing noise $\sigma=0.8$]{\includegraphics[width=0.45\linewidth]{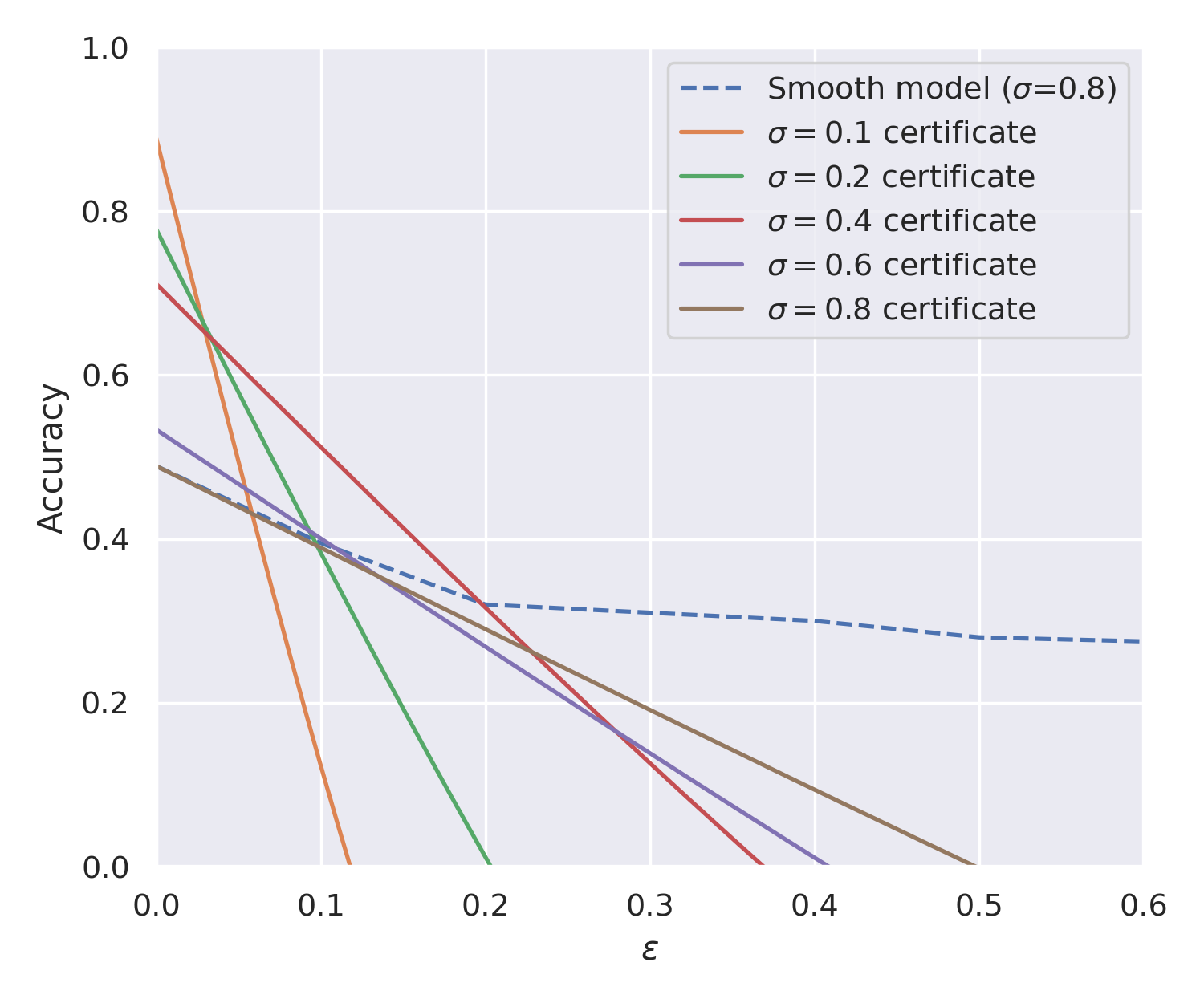}\label{fig:attacksmoothspeech10}}
        \caption{Certificates against online adversarial attacks for varying smoothing noises for the speech keyword detection task. We attack smooth models trained with different smoothing noises in these plots. Here we can perturb each input only once. The average size of perturbation is computed as per equation \ref{eq:adv_constraint}.}
        \label{fig:attacksmoothspeech}
        \vspace{-3mm}
\end{figure}

\begin{figure}[t]
     \centering
    \subfigure[Attacking model with smoothing noise $\sigma=4$]{\includegraphics[width=0.45\linewidth]{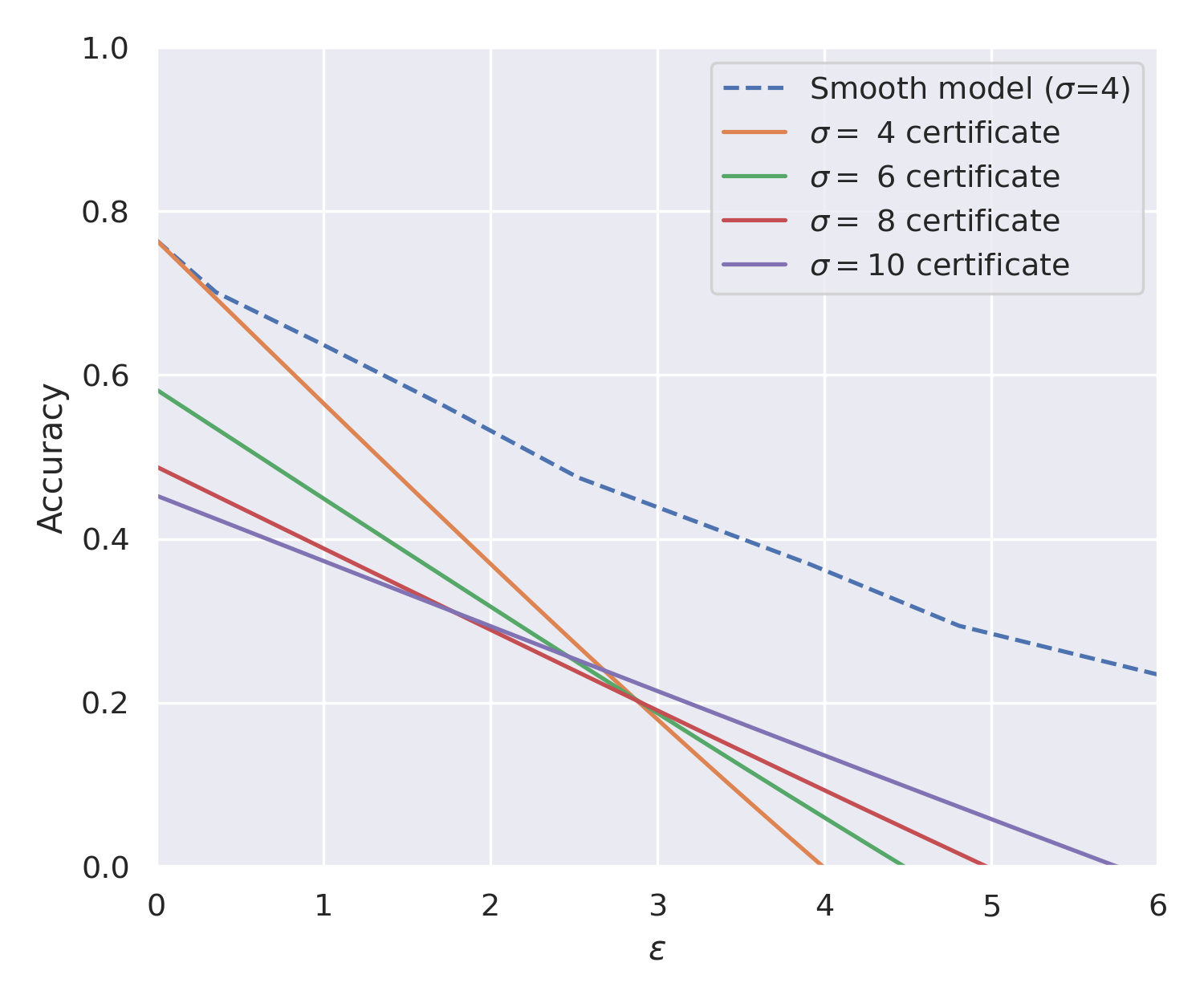}\label{fig:attacksmoothhar_2_4}}%
    \subfigure[Attacking model with smoothing noise $\sigma=6$]{\includegraphics[width=0.45\linewidth]{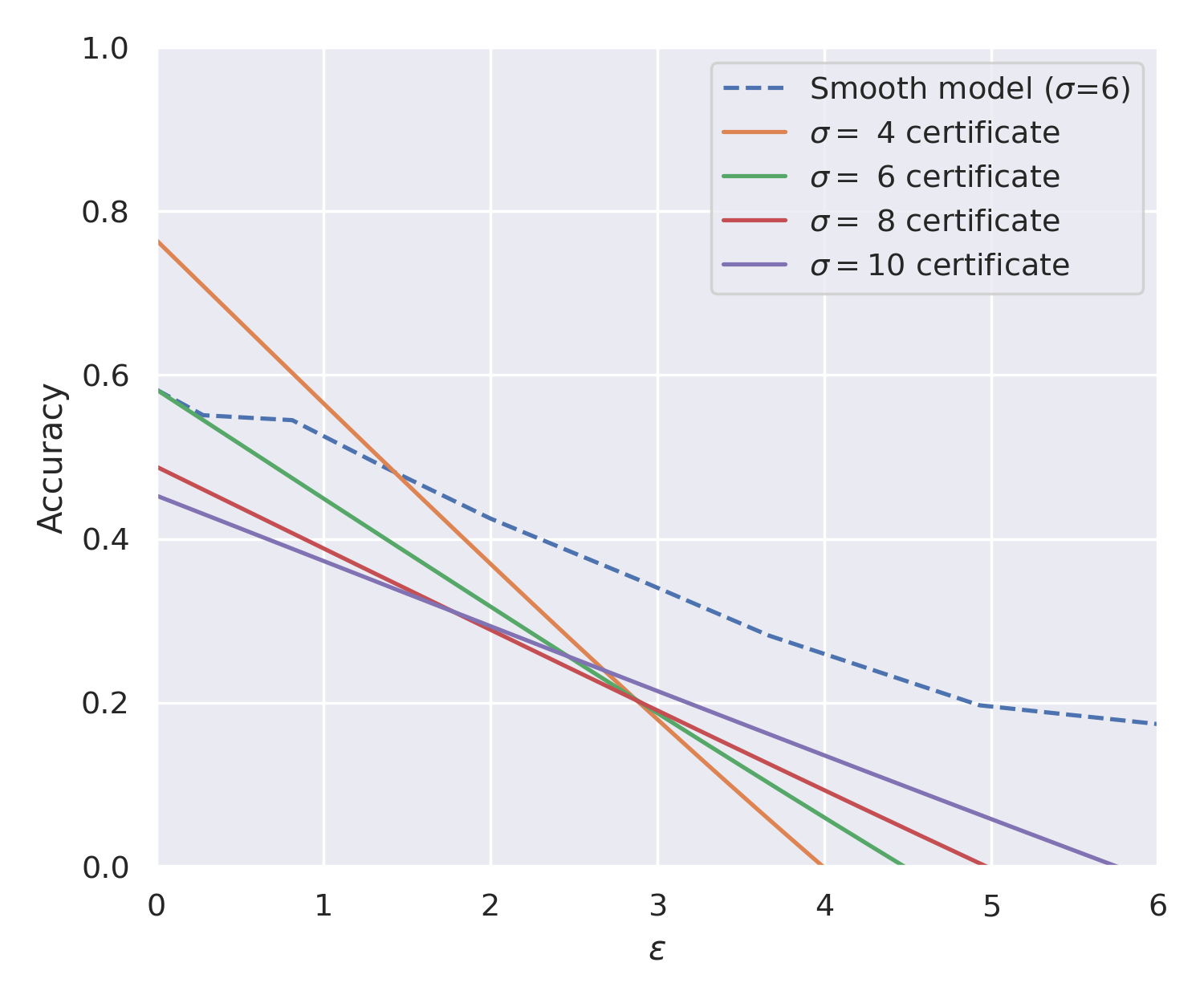}\label{fig:attacksmoothhar_2_6}}\\
    \subfigure[Attacking model with smoothing noise $\sigma=8$]{\includegraphics[width=0.45\linewidth]{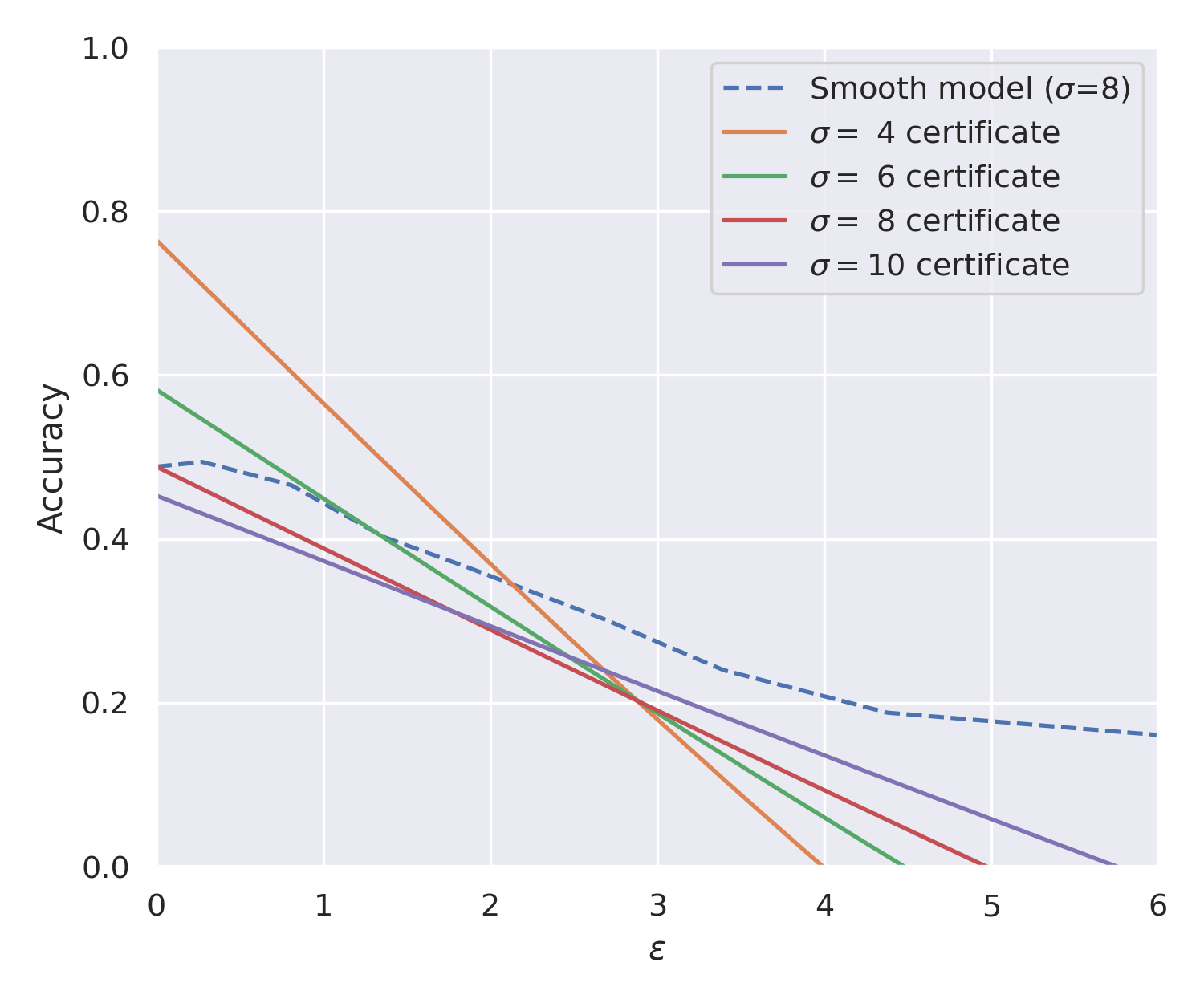}\label{fig:attacksmoothhar_2_8}}%
    \subfigure[Attacking model with smoothing noise $\sigma=10$]{\includegraphics[width=0.45\linewidth]{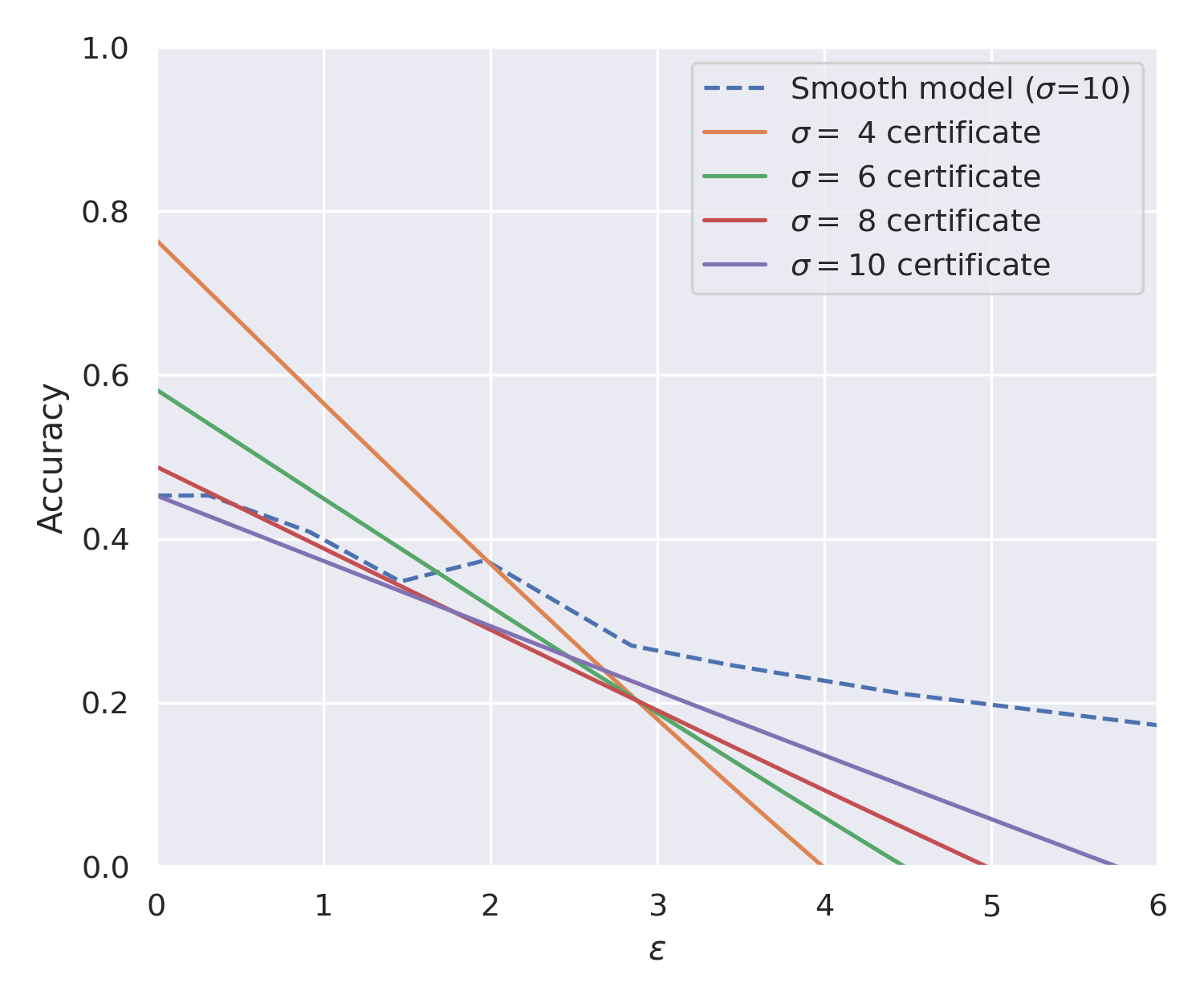}\label{fig:attacksmoothhar_2_10}}
        \caption{Certificates against online adversarial attacks for varying smoothing noises for the human activity recognition task. We attack smooth models trained with different smoothing noises in these plots. Here we can attack each window separately. The average size of perturbation is computed as per equation \ref{eq:adv_constraint_window}.}
        \label{fig:attacksmoothhar_2}
        \vspace{-3mm}
\end{figure}


\end{document}